\newcommand{\norm}[1]{\left\lVert#1\right\rVert}
\newtheorem{theorem}{Theorem}[section]
\newtheorem{lemma}[theorem]{Lemma}
\newtheorem{definition}[theorem]{Definition}
\newtheorem{assumption}[theorem]{Assumption}
\newtheorem{remark}[theorem]{Remark}
\newcommand{\M}{\mathcal{M}}
\newcommand{\A}{\mathcal{A}}
\newcommand{\T}{\mathrm{T}}
\newcommand{\R}{\mathbb{R}}
\newcommand{\etal}{ et al. }
\newcommand{\argmin}{\mathop{\rm argmin}}
\newcommand{\RR}{\mathbb{R}}
\newcommand{\Prox}{{\rm Prox}}
\newcommand{\Tr}{\mathrm{Tr}}
\newcommand{\st}{\mathrm{s.t. }}
\newcommand{\ie}{\mathrm{i.e. }}
\newcommand{\St}{\mathrm{St}}
\newcommand{\be}{\begin{equation}}
\newcommand{\ee}{\end{equation}}
\newcommand{\ba}{\begin{array}}
\newcommand{\ea}{\end{array}}
\newcommand{\bad}{\begin{aligned}}
\newcommand{\ead}{\end{aligned}}
\newcommand{\Retr}{\mathrm{Retr}}
\newcommand{\rank}{\mathrm{rank}}
\newcommand{\br}{\mathbb{R}}
\newcommand{\cP}{\mathcal{P}}
\newcommand{\Gr}{\mathrm{Gr}}
\newcommand{\myspan}{\mathrm{span}}
\newcommand{\qf}{\mathrm{qf}}
\newcommand{\grad}{\mathrm{grad}}
\newcommand{\vvec}{\mathrm{vec}}
\newcommand{\Proj}{\mathrm{Proj}}
\newcommand{\diag}{\mathrm{diag}}
\begin{document}
%
\title{Robust Low-rank Matrix Completion via an Alternating Manifold Proximal Gradient Continuation Method}
%
%
%

\author{Minhui~Huang,
        Shiqian~Ma,
        and~Lifeng~Lai
\thanks{M. Huang and L. Lai are with the Department
of Electrical and Computer Engineering, University of California, Davis, 
CA, 95616. Email: \{mhhuang, lflai\}@ucdavis.edu.}
\thanks{S. Ma is with the Department
of Mathematics, University of California, Davis, 
CA, 95616. Email: sqma@ucdavis.edu.}

\thanks{This research was partially supported by NSF HDR TRIPODS grant CCF-1934568, NSF grants CCF-1717943, CNS-1824553, CCF-1908258, DMS-1953210 and CCF-2007797, and UC Davis CeDAR (Center for Data Science and Artificial Intelligence Research) Innovative Data Science Seed Funding Program.}
}

\maketitle

\begin{abstract}
Robust low-rank matrix completion (RMC), or robust principal component analysis with partially observed data, has been studied extensively for computer vision, signal processing and machine learning applications. This problem aims to decompose a partially observed matrix into the superposition of a low-rank matrix and a sparse matrix, where the sparse matrix captures the grossly corrupted entries of the matrix. A widely used approach to tackle RMC is to consider a convex formulation, which minimizes the nuclear norm of the low-rank matrix (to promote low-rankness) and the $\ell_1$ norm of the sparse matrix (to promote sparsity). In this paper, motivated by some recent works on low-rank matrix completion and Riemannian optimization, we formulate this problem as a nonsmooth Riemannian optimization problem over Grassmann manifold. This new formulation is scalable because the low-rank matrix is factorized to the multiplication of two much smaller matrices. We then propose an alternating manifold proximal gradient continuation (AManPGC) method to solve the proposed new formulation. Convergence rate of the proposed algorithm is rigorously analyzed. Numerical results on both synthetic data and real data on background extraction from surveillance videos are reported to demonstrate the advantages of the proposed new formulation and algorithm over several popular existing approaches.
\end{abstract}

\begin{IEEEkeywords}
Robust Matrix Completion, Nonsmooth Optimization, Manifold Optimization
\end{IEEEkeywords}

%
\IEEEpeerreviewmaketitle

\section{Introduction}
\IEEEPARstart{R}{obust} matrix completion (RMC) targets at fulfilling the missing entries of a partially observed matrix with the presence of sparse noisy entries. The recovery  relies on a critical low-rank assumption of real datasets: a low dimensional subspace can capture most information of high-dimensional observations. Therefore, the RMC model frequently arises in a wide range of applications including recommendation systems \cite{koren2009matrix}, face recognition \cite{wright2009robust}, collaborative filtering \cite{koren2008factorization}, and MRI image processing \cite{otazo2015low}.

Due to its low-rank property, matrix completion can be naturally formulated as a rank constrained optimization problem. However, it is computationally intractable to directly minimize the rank function since the low-rank constraint is discrete and nonconvex. To resolve this issue, there is a line of research focusing on convex relaxation of RMC formulations \cite{cambier2016robust, Ma-PIEEE-2018}. For example, a widely used technique in these works is employing its tightest convex proxy, $\ie$ the nuclear norm. However, calculating the nuclear norm in each iteration can bring numerical difficulty for large-scale problems in practice, because algorithms for solving them need to compute a singular value decomposition (SVD) in each iteration, which can be very time consuming. Recently, nonconvex RMC formulations based on low-rank matrix factorization were proposed in the literature \cite{guo2018low, cao2016robust, chen2015fast, shen2014augmented}. In these formulations, the target matrix is factorized as the product of two much smaller matrices so that the low-rank property is automatically satisfied. The idea of factorization greatly reduces the price of promoting low-rankness, thus relieves the pressure of computation. On the other hand, researchers have also proposed the RMC problem over a fixed rank manifold \cite{cambier2016robust} and solve it by manifold optimization. The retraction operation of a fixed rank manifold only requires performing a truncated SVD, thus is much more computationally efficient.

In many practical scenarios, the collected matrix datasets always come with noise. Without an effective approach to deal with these noise, recovering the ground truth of a low-rank matrix can be prevented from a reasonable solution. Fortunately, the noises in many real datasets have some common structures such as the sparsity. Therefore, it is reasonable to assume the noisy entries are sparse in matrix completion problems. To deal with the sparse noisy entries in matrix completion, researchers have employed the $\ell_1$-norm instead of the $\ell_0$-norm for decomposing sparse noisy entries in matrix completion formulations. This leads to the robust matrix completion model, which is robust to sparse outliers. In practice, the $\ell_1$-norm minimization can be efficiently solved by iterative soft thresholding in many cases. Some other robust loss functions such as the Huber loss were also proposed in the literature \cite{zhao2016efficient}. 

In this paper, 
by utilizing the recent developments on manifold optimization, we propose a nonsmooth RMC formulation over the Grassmann manifold with a properly designed regularizer. The Grassmann manifold automatically restricts our factorizer on a fixed dimension subspace, thus promotes the fixed low rank property. In each iteration, we only require a QR decomposition as our retraction for the Grassmann manifold, which is more computationally efficient compared with the truncated SVD. Notice that in \cite{li2018nonconvex}, the author designed a regularizer to balance the scale of two factorizers. In our formulation, the QR decomposition automatically balances the scale of two factorizers, which prevents our formulation from being ill-conditioned. Moreover, compared with previous work on matrix completion over Grassmann manifold \cite{Dai-Grassmann-MC-1,Dai-Grassmann-MC-2, Keshavan-2009,Keshavan-2009-2}, our formulation is continuous with a smaller searching space. We then propose an alternating manifold proximal gradient (AManPG) algorithm for solving it. The AManPG algorithm alternately updates between the low-rank factorizer with a Riemannian gradient step and the nonsmooth sparse variable with a proximal gradient step. While recent ADMM based methods \cite{shen2014augmented, he2011online,He-Balzano-2012} lack any convergence guarantee, we can rigorously analyze the convergence of AManPG algorithm and prove a complexity bound of $\mathcal{O}(\epsilon^{-2})$ to reach an $\epsilon$-stationary point. Furthermore, compared with recent smoothing technique developed in \cite{cambier2016robust}, we solve the nonsmooth subproblem directly.
To further boost the convergence speed and recover the low-rank matrix with a high accuracy, we apply a continuation framework \cite{hale2008fixed} to AManPG. Finally, we conduct extensive numerical experiments and show that the proposed AManPG with continuation is more efficient compared with previous works on the RMC problem.

The remainder of the paper is organized as follows. In Section \ref{sec:formulation}, we briefly review some existing related work on the RMC problem and give our new RMC formulation over Grassmann manifold. In Section \ref{sec:manpg}, we review some basics of manifold optimization and propose ManPG algorithm for our RMC formulation. In Section \ref{sec:amanpgc} and Section \ref{sec:convergence}, we propose our AManPG with continuation algorithm and provide its convergence analysis. 
In Section \ref{sec:numerical}, numerical results are presented to demonstrate the the advantages of the proposed new formulation and algorithm. Finally, the conclusions are given in Section \ref{sec:conclution}.

\section{Problem Formulation} \label{sec:formulation}
In this section, to motivate our proposed problem formulation, we will first introduce some closely related work. We then present our problem formulation.
\subsection{Related Work}

Robust principal component analysis (PCA) \cite{candes2011robust,chandrasekaran2011rank} is an important tool in data analysis and has found many interesting applications in computer vision, signal processing, machine learning, and statistics, and so on. The goal is to decompose a given matrix $M\in\mathbb{R}^{m\times n}$ into the superposition of a low-rank matrix $L$ and a sparse matrix $S$, i.e., $M=L+S$. 
The works by Cand\`es \etal \cite{candes2011robust} and Chandrasekaran \etal \cite{chandrasekaran2011rank} formulate the problem as the following convex optimization problem:
\be\label{RPCA}
\ba{lll}
\min\limits_{L,S} \ \|L\|_* + \gamma\|S\|_1, \ \st \ L+S=M,
\ea
\ee
where $\gamma>0$ is a weighting parameter, the nuclear norm $\|L\|_*$ sums the singular values of $L$, and the $\ell_1$ norm $\|S\|_1$ sums the absolute values of all entries of $S$. When only a subset of the entries of $M$ is observed, robust PCA becomes the robust low-rank matrix completion problem \cite{cambier2016robust}. Similar to \eqref{RPCA}, a convex formulation for RMC can be cast as follows:
\be\label{RMC}
\ba{lll}
\min\limits_{L,S} \ \|L\|_* + \gamma\|S\|_1, \ \st \ \cP_\Omega(L+S)=\cP_\Omega(M),
\ea
\ee
where $\Omega$ denotes the set of the indices of the observed entries and $\cP_\Omega: \br^{m\times n}\mapsto\br^{m\times n}$ denotes a projection defined as: $[\cP_\Omega(Z)]_{ij}=Z_{ij}$, if $(i,j)\in\Omega$, and $[\cP_\Omega(Z)]_{ij}=0$, otherwise.
The convex formulations \eqref{RPCA} and \eqref{RMC} have been studied extensively in the literature, and we refer to the recent survey paper \cite{Ma-PIEEE-2018} for algorithms for solving them.

 By assuming that the rank of $L$ is known (denoted by $r$), the idea of many nonconvex formulations is based on the fact that $L$ can be factorized to $L = UV$, where $U\in\br^{m\times r}$, $V\in\br^{r\times n}$. Replacing $L$ by $UV$ in \eqref{RPCA} and \eqref{RMC} leads to various nonconvex formulations for robust PCA and RMC. In particular, Li \etal \cite{li2018nonconvex} suggested using subgradient method to solve the following nonsmooth robust matrix recovery model
\be\label{SUBGM}
\ba{lll}
\min\limits_{U \in \R^{m\times r}, V \in \R^{r\times n}} \frac{1}{\lvert \Omega \rvert}\norm{y - \A(UV)}_1, \\
\ea
\ee
where $y$ is a small number of linear measurements and $\A: \RR^{m\times n} \to \RR^{\lvert \Omega \rvert}$ is a known linear operator. 
 Shen \etal \cite{shen2014augmented}  proposed the LMaFit algorithm that implements an alternating direction method of multipliers (ADMM) for solving the following nonconvex formulation of robust PCA:
\be\label{LMaFit}
\ba{lll}
\min\limits_{U \in \R^{m\times r}, V \in \R^{r\times n}, Z \in \R^{m\times n} } &\norm{\cP_{\Omega}(Z - M)}_1, \\
\hspace{15mm}\st &\ UV-Z = 0.
\ea
\ee
Note that if $(\hat{U},\hat{V})$ solves \eqref{LMaFit}, then $(\hat{U}Q, Q^{-1}\hat{V})$ also solves \eqref{LMaFit} for any invertible $Q\in\br^{r\times r}$. Since all matrices $\hat{U}Q$ share the same column space, Dai \etal \cite{Dai-Grassmann-MC-1,Dai-Grassmann-MC-2} exploited this fact and formulated the matrix completion problem as the following optimization problem over a Grassmann manifold:
\be\label{Dai-MC}
\ba{lll}
\min\limits_{U\in\Gr(m,r),V\in \R^{r\times n}} \|\cP_\Omega(UV-M)\|_F^2,
\ea
\ee
where $\Gr(m,r)$ denotes the Grassmann manifold. However, it is noticed that the outer problem for $U$ might be discontinuous at points $U$ for which the $V$ problem does not have a unique solution. To address this issue, Keshavan \etal \cite{Keshavan-2009,Keshavan-2009-2} proposed to optimize both the column space and row space at the same time, which results in the following so-called OptSpace formulation for matrix completion:
\be\label{OptSpace}
\ba{lll}
\min\limits_{U\in\Gr(m,r),V\in \Gr(n, r)}\min\limits_{\Sigma \in\br^{r\times r}} &\|\cP_\Omega(U\Sigma V^\top-M)\|_F^2 \\
&+ \lambda\|U\Sigma V^\top\|_F^2.
\ea
\ee
Here $\lambda>0$ is a weighting parameter, and the regularizer $\|U\Sigma V^\top\|_F^2$ is used so that the outer problem is continuous. Boumal and Absil \cite{RTRMC-2011,boumal2015rtrmcextended} proposed to study the following variant of \eqref{Dai-MC}:
\be\label{RTRMC}
\ba{lll}
\min\limits_{U\in\Gr(m,r),V\in \R^{r\times n}} \frac{1}{2}\|\cP_\Omega(UV-M)\|_F^2 + \frac{\lambda^2}{2}\|\cP_{\bar{\Omega}}(UV)\|_F^2,
\ea
\ee
where $\bar{\Omega}$ is the complement of $\Omega$, and they proposed to use Riemannian trust region method to solve this problem. Comparing with OptSpace \eqref{OptSpace}, formulation \eqref{RTRMC} has a much smaller searching space. Note that in \eqref{RTRMC}, $\lambda$ is usually chosen to be very close to zero, as it indicates that we have a small confidence that the entries $(UV)_{ij}$ for $(i,j)\notin\Omega$ are equal to zero.

For RMC, Cambier and Absil \cite{cambier2016robust} proposed the following Riemannian optimization formulation:
\be\label{Cambier-Absil-RMC}
\ba{lll}
\min\limits_{X\in \M_r} \norm{\cP_{\Omega}(X - M)}_1 + \lambda\|\cP_{\bar{\Omega}}(X)\|_F^2,
\ea
\ee
where $\M_r$ denotes the fixed-rank manifold, i.e., $\M_r:=\{X\mid \rank(X)=r\}$. The algorithm proposed in \cite{cambier2016robust} needs to smooth the $\ell_1$ norm first to change the problem to a smooth problem, and then applies the Riemannian conjugate gradient method to solve the smoothed problem. As a result, the algorithm in \cite{cambier2016robust} does not solve \eqref{Cambier-Absil-RMC} exactly. Related to \eqref{Dai-MC} and \eqref{LMaFit}, He \etal proposed the GRASTA algorithm \cite{he2011online,He-Balzano-2012} which can be used to solve the following formulation of RMC: 
\be\label{GRASTA}
\ba{lll}
\min\limits_{U\in\Gr(m,r),V\in\br^{r\times n}} \ \|\cP_\Omega(UV-M)\|_1.
\ea
\ee
GRASTA uses alternating minimization and ADMM to solve \eqref{GRASTA}, which is efficient in practice but lacks convergence guarantees. Moreover, Zhao \etal \cite{zhao2016efficient} explores the effect of different robust loss functions for proposing the robustness against specific categories of outliers. He \etal \cite{he2019robust} derived a correntropy-based cost function and applied the half-quadratic technique to solve the formulation. Zeng \etal \cite{zeng2017outlier} proposed two schemes, namely the iterative $\ell_p$-regression algorithm and ADMM for RMC under $\ell _p$ minimization. Zhang \etal \cite{zhang2019correction} proposed the RMC problem over Hankel matrix and claimed that they can deal with case when all the observations in one column are erroneous. 

\subsection{Our formulation and contributions} \label{sec:contribution}

Motivated by these existing works, in this paper, we propose to solve the following formulation of RMC:
\be\label{ourRMC}
\ba{lll}
\min\limits_{U \in \Gr(m,r), V\in\br^{r\times n}, S\in \mathbb{R}^{m \times n}} F(U,V,S) =\\
\hspace{-4mm}\frac{1}{2}\norm{ \mathcal{P}_{\Omega}(UV - M + S)}_F^2 + \frac{\lambda^2}{2}\norm{\mathcal{P}_{\bar{\Omega}}(UV)}_F^2 + \gamma\norm{ \mathcal{P}_{\Omega}(S)}_1.
\ea
\ee
We show that the manifold proximal gradient method (ManPG) proposed by Chen \etal \cite{chen2019proximal} can be applied to solve \eqref{ourRMC} and the corresponding convergence analysis applies naturally. We then propose a variant of ManPG, named alternating ManPG (AManPG) that can significantly improve the efficiency of ManPG for solving \eqref{ourRMC}. We further rigorously analyze the convergence rate of AManPG. Compared with GRASTA, our proposed algorithms for solving \eqref{ourRMC} have rigorous convergence guarantees and convergence rate analysis. Compared with RMC \eqref{Cambier-Absil-RMC}, our algorithms solve the nonsmooth problem \eqref{ourRMC} directly. Compared with the convex formulation \eqref{RMC}, our nonconvex formulation appears to be more robust and scalable, see the numerical experiments for comparison results. Finally, to further accelerate the convergence of AManPG, we incorporate the so-called continuation technique on the weighting parameter $\gamma$ in \eqref{ourRMC}. Our numerical results on both synthetic data and real data on background extraction from surveillance video demonstrate that our final algorithm, AManPG with Continuation (AManPGC), compares favorably with existing methods for RMC.


\section{The ManPG Algorithm for Robust Matrix Completion}\label{sec:manpg}

In this section, we show that the ManPG algorithm recently proposed by Chen \etal \cite{chen2019proximal} can be naturally adopted to solve \eqref{ourRMC}. Note that the ManPG algorithm was originally proposed for solving problems over the Stiefel manifold, but the manifold in \eqref{ourRMC} is Grassmann manifold. Therefore, we need to further elaborate on the details how ManPG works on Grassmann manifold. To this end, we first introduce some backgrounds on the geometry of Grassmann manifold.

\subsection{Geometry of the Grassmann Manifold}

In this subsection we briefly introduce concepts and properties of Grassmann manifold. Much of the materials here are from \cite{RTRMC-2011}, and we include them here for the ease of discussion later. Grassmann manifold $\Gr(m,r)$ is the set of $r$-dimensional linear subspaces of $\R^m$ endowed with quotient manifold structure, whose dimension is $\dim(\Gr(m,r)) = r(m - r)$ \cite{absil2009optimization}. Each point of $\Gr(m,r)$ is a linear subspace spanned by the column space of a full-rank matrix $U$:
\be\label{def-Gr}
\ba{lll}
\Gr(m,r) = \{\myspan(U): U \in \R_*^{m\times r}\},
\ea
\ee
where $\R_*^{m\times r}$ denotes the set of all $m \times r$ matrices with full column rank, and $\myspan(U)$ denotes the subspace spanned by the columns of $U$. Since multiplying by an $r \times r$ orthonormal matrix does not change the column space of $U$, we can regard $\Gr(m,r)$ as a quotient of $\R_*^{m\times r}$ by the equivalent relation $U' = UQ$, where $Q$ is any $r \times r$ orthonormal matrices. Endowed with the Riemannian metric $\langle U, V \rangle = \Tr(UV)$, the Grassmann manifold is also a Riemannian quotient manifold, and it admits a tangent space at each point of $\Gr(m,r)$ given by
\be\label{grass-tangent-space}
\ba{lll}
\T_U\Gr(m,r) = \{H \in \R^{m\times r}: U^\top H = 0 \}.
\ea
\ee
For Riemannian manifold $\M$, the Riemannian gradient of a smooth function $f: \M \to \R$ is defined as follows.
\begin{definition} {(Riemannian Gradient)}
Given a smooth function $f: \M \to \R$, the Riemannian gradient of $f$  at $X \in \M$, denoted by $\grad f(X)$,  is the unique tangent vector in $\T_X\M$ such that
\be\label{def-riemannain-grad}
\langle \grad f(X), \xi \rangle = Df(X)[\xi], \quad \forall \xi \in \T_X\M,
\ee
where $Df$ denotes the directional derivatives of $f$.
\end{definition}
For Grassmann manifold, the orthogonal projector from $\R^{m \times r}$ onto the tangent space $\T_U\Gr(m,r)$ is given by:
\be\label{projector}
\ba{lll}
\Proj_U: \R^{m \times r} \to \T_U \Gr(m,r),\\
\Proj_U(H) = (I - UU^\top)H.
\ea
\ee
The definition of retraction operation for manifold $\M$ is given below.
\begin{definition} {(Retraction)}
Let $\Retr_X(\xi):\T\mathcal{M} \to \mathcal{M}$ be a mapping from the tangent bundle $\T\mathcal{M}$ to the manifold $\mathcal{M}$. We call $\Retr_X(\cdot)$ a retraction at $X$ if
\be\label{retraction}
\ba{lll}
\Retr_X(0) = X, \quad \frac{d}{dt}\Retr_X(t\xi)|_{t=0} = \xi,\\
\forall X\in \mathcal{M},\quad \forall \xi \in \T_X\mathcal{M}.
\ea
\ee
\end{definition}

In our numerical experiments, we choose the QR decomposition as the retraction for Grassmann manifold:
\be\label{qr}
\Retr_U(H) = \qf(U+H),
\ee
where $\qf(X)$ denotes the $Q$-factor of the QR decomposition of $X$.

\subsection{The ManPG Algorithm}

Recently, Chen \etal \cite{chen2019proximal} proposed a novel ManPG algorithm for solving nonsmooth optimization problem over the Stiefel manifold $\St(n,r)$ in the following form:
\be\label{manpg-composite-form}
\min\limits_{X \in\St(n,r)} F_1(X) + F_2(X),
\ee
in which $F_1$ is smooth with Lipschitz continuous gradient, $F_2$ is nonsmooth and convex. Here the smoothness, convexity and Lipschitz continuity are interpreted when the functions are considered in the ambient Euclidean space. A typical iteration of ManPG algorithm for solving \eqref{manpg-composite-form} is as follows:
\be\label{ManPG-general}\ba{lll}
Y^k     & := & \argmin_Y \ \langle \nabla F_1(X^k), Y \rangle + \frac{1}{2t}\|Y\|_F^2 \\
&&+ F_2(X^k+Y), \ \st, \ Y\in\T_{X^k}\St(n,r) \\
X^{k+1} & := & \Retr_{X^k}(\alpha_kY^k),
\ea\ee
where $t>0$ and $\alpha_k>0$ are step sizes. Chen \etal \cite{chen2019proximal} proved that the iteration complexity of ManPG \eqref{ManPG-general} is $O(1/\epsilon^2)$ for obtaining an $\epsilon$-stationary solution. They also demonstrated that ManPG is very efficient for solving sparse PCA and compressed modes problems.

Now we discuss how to apply ManPG \cite{chen2019proximal} to solve \eqref{ourRMC}. For ease of presentation, we denote the smooth part of $F$ in \eqref{ourRMC} by $\bar{f}$, i.e.,
\be\label{def-f-bar}
\ba{lll}
\hspace{-2mm}\bar{f}(U,V,S) = \frac{1}{2}\norm{\mathcal{P}_{\Omega}(UV - M + S)}_F^2 + \frac{\lambda^2}{2}\norm{\mathcal{P}_{\bar{\Omega}}(UV)}_F^2.
\ea
\ee
Note that for fixed $U$ and $S$, the optimal $V$ of $F(U,V,S)$ (and $\bar{f}(U,V,S)$) is uniquely determined. Therefore, by denoting
\be\label{def-V}
\ba{lll}
V_{U,S}:=\argmin_V \ \bar{f}(U,V,S), 
\ea
\ee
and 
\be\label{def-f}
\ba{lll}
 f(U,S) = \bar{f}(U,V_{U,S},S),
\ea
\ee
we know that our RMC formulation \eqref{ourRMC} reduces to
\be\label{ourRMC-noV}
\min\limits_{U\in\Gr(m,r),S\in\br^{m\times n}} \ f(U,S) + \gamma\norm{ \mathcal{P}_{\Omega}(S)}_1.
\ee
It is easy to see that ManPG for solving \eqref{ourRMC-noV} reduces to the following two subproblems in the $k$-th iteration:
\begin{subequations}\label{ManPG-RMC}
\begin{align}
\begin{split}
\Delta S^{k} :=&  \quad \argmin_{\Delta S} \ \langle \nabla_S f(U^k,S^k), \Delta S \rangle + \frac{1}{2t_S}\|\Delta S\|_F^2 \\
&\quad + \gamma\norm{\mathcal{P}_{\Omega}(S^k+\Delta S)}_1 \label{ManPG-RMC-dS}\\
\end{split}\\
\begin{split}
\Delta U^{k} := & \quad \argmin_{\Delta U} \ \langle \nabla_U f(U^k,S^k), \Delta U \rangle + \frac{1}{2t_U}\|\Delta U\|_F^2,\\
& \quad\ \st, \ \Delta U \in \T_{U^k}\Gr(m,r) \label{ManPG-RMC-dU}\\
\end{split}\\
\begin{split}
S^{k+1} & := \quad S^k + \alpha\Delta S^k,\label{ManPG-RMC-S+}\\
\end{split}\\
\begin{split}
U^{k+1} & := \quad \Retr_{U^k}(U^k + \beta \Delta U^k) \label{ManPG-RMC-U+},\\
\end{split}
\end{align}
\end{subequations}
where $t_S$, $t_U$, $\alpha$ and $\beta$ are all step sizes. We now make some necessary remarks on this ManPG algorithm \eqref{ManPG-RMC}. First, \eqref{ManPG-RMC} is actually slightly different with a direct application of ManPG for solving \eqref{ourRMC-noV}. For a direct application of ManPG, we should have $t_S=t_U$ and $\alpha=\beta$. Here in \eqref{ManPG-RMC} we allow these step sizes to be different so that we have more freedom to choose the best step sizes in practice. Also we note that \eqref{ManPG-RMC-dU} and \eqref{ManPG-RMC-U+} correspond to a Riemannian gradient step with respect to the $U$ variable. The updates \eqref{ManPG-RMC-dS} and \eqref{ManPG-RMC-S+} correspond to a proximal gradient step for the $S$ variable in the Euclidean space. This can be interpreted in the following way. First, in RMC \eqref{ourRMC-noV}, the $U$ variable does not appear in the nonsmooth part of the objective, so it is natural to perform a Riemannian gradient step for $U$. Second, for fixed $U$, the $S$ problem is only an unconstrained problem in the Euclidean space, so it is reasonable to take a proximal gradient step. Moreover, the two subproblems \eqref{ManPG-RMC-dU} and \eqref{ManPG-RMC-dS} are very easy to solve. 

\begin{algorithm}[tbp]
\caption{ManPG for solving RMC \eqref{ourRMC-noV}}\label{alg:manpg}
\begin{algorithmic}[1]
\STATE{Input: step sizes $t_S$, $t_U$,$\alpha$, $\beta$, parameters $\lambda$, $\gamma$, accuracy tolerance $\epsilon$, and initial point $(U^0,S^0)$. }
\FOR{$k = 0, 1, \ldots $}
\STATE{Compute $V^k_{U^k, S^k}$ using \eqref{Compute-V}}
\STATE{Compute $\Delta S^k$ by \eqref{ManPG-RMC-dS-simple}}
\STATE{Update  $S^{k+1}$ by \eqref{ManPG-RMC-S+}}
\STATE{Compute $\Delta U^k$ by \eqref{ManPG-RMC-dU-simple}} 
\STATE{Update  $U^{k+1}$ by \eqref{ManPG-RMC-U+}}
\IF{$\norm{\Delta U^{k+1}}_F^2 + \norm{\Delta S^{k+1}}_F^2 \leq\epsilon^2$}
\STATE{break}
\ENDIF
\ENDFOR
\STATE{Output: $U^{k+1}$, $S^{k+1}$}
\end{algorithmic}
\end{algorithm}

Specifically, \eqref{ManPG-RMC-dU} can be reduced to
\be\label{ManPG-RMC-dU-simple}
\Delta U^{k} = -t_U\grad_U f(U^k,S^k),
\ee
i.e., it is the negative Riemannian gradient of $f$ multiplied by the step size $t_U$. The $\Delta S$ subproblem \eqref{ManPG-RMC-dS} can be solved by a simple $\ell_1$ norm shrinkage operation (note that we are only interested in the $\cP_\Omega(\Delta S^k)$ and $\cP_{\bar{\Omega}}(\Delta S^k)$ can be simply set to $0$):
\be\label{ManPG-RMC-dS-simple}
\ba{lll}
\cP_\Omega(\Delta S^k) = &\Prox_{\gamma t_S\|\cdot\|_1}(\cP_\Omega(S^k-t_S\nabla_S f(U^k,S^k)))\\
& - \cP_\Omega(S^k).
\ea
\ee
Now, to implement the ManPG \eqref{ManPG-RMC}, the only remaining component is to calculate the Riemannian gradient $\grad_U f(U,S)$ used in \eqref{ManPG-RMC-dU-simple}. The procedure for computing it is outlined in \cite{boumal2015rtrmcextended}. By assuming that the subspace of the Grassmann manifold is represented by orthonormal bases, which means $U$ is restricted to the Stiefel manifold, the Riemannian gradient of the smooth function $f(U,S)$ with respect to $U$ is given by:
\be\label{grad_Uf}
\ba{lll}
\grad_U f(U,S) = &((1 - \lambda^2)C \odot (UV_{U,S} - M + S) -\\
 &\lambda^2(M-S))V_{U,S}^\top + \lambda^2 U(V_{U,S}V_{U,S}^\top),
\ea
\ee
where $C\in\br^{m\times n}$ is the mask operator whose components are given by: $C_{ij}=1$ if $(i,j)\in\Omega$, and $C_{ij}=0$ otherwise. Here $V_{U,S}$ is defined in \eqref{def-V}, and it can be computed as follows:
\begin{equation} \label{Compute-V}
\vvec(V_{U,S}) = A^{-1} \vvec(U^\top [C \odot (M - S)]),
\end{equation}
where $\vvec$ denotes the vectorization operator, and $A$ is defined as
\[
A = (I_n \otimes U^\top)\diag(\vvec((1-\lambda^2)C))(I_n \otimes U) + \lambda^2 I_{rn}.
\]
and $\otimes$ denotes the Kronecker product. For more details about these calculation, we refer the reader to \cite{boumal2015rtrmcextended}.

With these preparations, we can finally summarize the ManPG algorithm \eqref{ManPG-RMC} for solving \eqref{ourRMC} (or, \eqref{ourRMC-noV}) as in Algorithm \ref{alg:manpg}.

\section{Alternating ManPG with Continuation}\label{sec:amanpgc}

It should be noted that ManPG updates $S$ and $U$ in parallel. That is, ManPG \eqref{ManPG-RMC} is a Jacobi type iterative algorithm. One way that can possibly improve the speed of ManPG is to use a Gauss-Seidel type algorithm. This idea has also been adopted in \cite{Chen-AManPG-2019}, where the authors showed that the Gauss-Seidel type ManPG performs much better than the original Jacobi type ManPG. Motivated by this, here we also propose an alternating ManPG (AManPG), which updates $S$ and $U$ sequentially, instead of in parallel. However, one crucial thing to note here is that the $V$ variable will need to be re-calculated, when we have a new $S$ variable before we update the $U$ variable. Our AManPG algorithm is summarized in Algorithm \ref{alg:Amanpg}.


\begin{algorithm}[ht]
\caption{AManPG for solving \eqref{ourRMC-noV}}\label{alg:Amanpg}
\begin{algorithmic}[1]
\STATE{Input: step sizes $t_S$, $t_U$,$\alpha$, $\beta$, parameters $\lambda$, $\gamma$, threshold $\epsilon$, $U^0, S^0$. }
\FOR{$k = 0,1,\ldots$}
\STATE{Compute $V^k_{U^k, S^k}$ using \eqref{Compute-V}}
\STATE{Compute $\Delta S^k$ by \eqref{ManPG-RMC-dS-simple}}
\STATE{Update $S^{k+1}$ by \eqref{ManPG-RMC-S+}}
\STATE{Compute $V^k_{U^k, S^{k+1}}$ using \eqref{Compute-V}}
\STATE{Compute $\Delta U^k =-t_U\grad_U f(U^k,S^{k+1})$} 
\STATE{Update $U^{k+1}$ by \eqref{ManPG-RMC-U+}}
\IF{$\norm{\Delta U^{k+1}}_F^2 + \norm{\Delta S^{k+1}}_F^2 \leq \epsilon^2$}
\STATE{break}
\ENDIF
\ENDFOR
\STATE{Output: $U^{k+1}$, $S^{k+1}$}
\end{algorithmic}
\end{algorithm}

\begin{remark}
When we compute $\Delta U^k$ in AManPG, we used the latest $S^{k+1}$, which requires us to compute the latest $V^k_{U^k, S^{k+1}}$. While in ManPG, we used $S^k$ in the updates of $\Delta U^k$, and this does not require us to compute another $V^k$. This is the main difference between AManPG (Algorithm \ref{alg:Amanpg}) and ManPG (Algorithm \ref{alg:manpg}). In both algorithms, we always set $\alpha=\beta=1$. Noticing that the matrix $A$ only depends on variable $U$, there is no need to recalculate $A$ when computing $V^k_{U^k, S^{k+1}}$.
\end{remark}

\begin{algorithm}[h]
\caption{AManPG with Continuation (AManPGC) for solving \eqref{ourRMC-noV}}\label{alg:AmanpgC}
\begin{algorithmic}[1]
\STATE{Input: Step sizes $t_S$, $t_U$, parameters $\gamma_0\gg\gamma_{\min}$, shrinking factors $\mu_1<1$, $\mu_2<1$, initial accuracy tolerance $\epsilon^0$}
\STATE{Initialize: $U^0$, $S^0$. Set $\ell = 0$}
\WHILE{$\gamma_\ell > \gamma_{\min}$}
\STATE{Call AManPG to solve \eqref{ourRMC-noV} with $\gamma=\gamma_\ell$, and set the output of AManPG as $(U^{\ell+1},S^{\ell+1})$.}
\STATE{$\gamma_{\ell+1} = \mu_1\gamma_\ell$}
\STATE{$\epsilon_{\ell+1} = \mu_2\epsilon_\ell$}
\STATE{$\ell = \ell + 1$}
\ENDWHILE
\STATE{Output: $U^\ell$, $S^\ell$}
\end{algorithmic}
\end{algorithm}

{\bf The continuation technique.} There are two parameters in the model \eqref{ourRMC}: $\lambda$ and $\gamma$. Since $\lambda$ indicates our confidence level of the entries of $(UV)$ being zero, it needs to very small. In practice, it is easy to choose $\lambda$, and in our numerical experiments, we always choose $\lambda = 10^{-8}$. The parameter $\gamma$ in \eqref{ourRMC} controls the sparsity level of $\cP_\Omega(S)$. A larger $\gamma$ yields sparser $\cP_\Omega(S)$. However, in practice, we usually have no clue how sparse the matrix $S$ should be. Thus, it is not easy to choose $\gamma$. A usual practice in the literature to deal with this issue is to conduct a continuation technique on $\gamma$. Roughly speaking, the continuation starts with solving \eqref{ourRMC} with a relatively large $\gamma$. Then the parameter $\gamma$ is decreased and \eqref{ourRMC} is solved again. This process is repeated until $\gamma$ is very small. This idea has been widely adopted in the literature, e.g., \cite{hale2008fixed,ma2011fixed,goldfarb2011convergence,Xiao-Zhang-homotopy-2013}. Combining this continuation idea with our AManPG algorithm, we obtain the AManPGC algorithm which works greatly in practice as confirmed by our numerical results in Section \ref{sec:numerical}. AManPGC is summarized in Algorithm \ref{alg:AmanpgC}. Note that in Algorithm \ref{alg:AmanpgC}, we also shrink the accuracy tolerance $\epsilon$ in each iteration, as we want to solve the problem more and more accurately.

\section{Convergence Analysis for AManPG}\label{sec:convergence}

In this section, we analyze the convergence behavior and iteration complexity of AManPG (Algorithm \ref{alg:Amanpg}). To simplify the notation, we denote $\M=\Gr(m,r)$ and $h(S)=\gamma\|\cP_\Omega(S)\|_1$, and we analyze the convergence of AManPG for solving the following problem:
\be\label{ourRMC-noV-rewrite}
\min\ F(U, S) = f(U, S) + h(S), \ \st, \ U\in\M,
\ee
where $f(U,S)$ is smooth and $h(S)$ is nonsmooth and convex. Here the smoothness and convexity are interpreted when the functions are considered in the ambient Euclidean space. For simplicity, we rewrite the AManPG for solving \eqref{ourRMC-noV-rewrite} here. One typical iteration of AManPG for solving \eqref{ourRMC-noV-rewrite} is:
\begin{subequations}\label{AManPG-general}
\begin{align}
\begin{split}
\Delta S^{k} := & \quad \argmin_{\Delta S} \ \langle \nabla_S f(U^k,S^k), \Delta S \rangle + \frac{1}{2t_S}\|\Delta S\|_F^2\\
& \quad + h(S^k+\Delta S), \label{AManPG-general-dS}\\
\end{split}\\
\begin{split}
S^{k+1} := & \quad S^k + \alpha \Delta S^k,\label{AManPG-general-S+}\\
\end{split}\\
\begin{split}
\Delta U^{k} := & \quad \argmin_{\Delta U} \ \langle \nabla_U f(U^k,S^{k+1}), \Delta U \rangle + \frac{1}{2t_U}\|\Delta U\|_F^2,\\
& \quad \ \st \ \Delta U \in \T_{U^k}\M \label{AManPG-general-dU}\\
\end{split}\\
\begin{split}
U^{k+1} := & \quad \Retr_{U^k}(U^k + \beta\Delta U^k) \label{AManPG-general-U+}.\\
\end{split}
\end{align}
\end{subequations}

We make the following assumptions of \eqref{ourRMC-noV-rewrite} throughout this section.

\begin{assumption} {($F$ is lower bounded)} \label{assu-lower-bounded}
There exists a finite constant $F^*$, such that
\[
F(X) \geq F^*,\quad \forall X \in \mathcal{M}.
\]
\end{assumption}
Note that for \eqref{ourRMC-noV}, it is easy to see that $F^*=0$.

\begin{assumption} {(Lipschitz Continuity of $\nabla_S f(U,S)$)} \label{assu-Lip-nabla-S}
The gradient $\nabla_S f(U,S)$ is Lipschitz continuous with Lipschitz constant $L_S$. That is
\[
\ba{lll}
\norm{\nabla_S f(U, S_1) - \nabla_S f(U, S_2)}_F \leq  L_S\norm{S_1 - S_2}_F, \\
   \forall S_1, S_2 \in \mathbb{R}^{m \times n}, U \in \mathcal{M}.
   \ea
\]
\end{assumption}

The following assumption is about $\grad_U f(U,S)$, which regards the regularity of the pullback function $\hat{f}(\Delta U, S) = f(\Retr_U(\Delta U), S)$, and differs from the standard Lipschitz continuity assumption because of the retraction operator. This assumption was originally suggested in \cite{boumal2018global}.

\begin{assumption}{(Restricted Lipschitz-type gradient for pullbacks)}\label{assu-pullback}
There exists $L_U \ge 0$ such that, for sequence $(U^k,S^k)_{k\geq 0}$ generated by AManPG (Algorithm \ref{alg:Amanpg}), the pullback function $\hat{f}_k(\Delta U) = f(\Retr_{U^k}(\Delta U),S^{k+1})$ satisfies
\[
\ba{lll}
\left|\hat{f}_k(\Delta U) - [f(U^k, S^{k+1}) + \langle \Delta U, \grad_U f(U^k, S^{k+1})\rangle ] \right| \\
\leq \frac{L_U}{2} \norm{\Delta U}_F^2,  \forall \Delta U \in \T_{U^k}\mathcal{M}.
\ea
\]
\end{assumption}

From the Theorem 4.1 in \cite{yang2014optimality}, we can define the stationary point of problem \eqref{ourRMC-noV-rewrite} as follows.

\begin{definition} {(Stationary point).}
A pair of $(U,S)\in \M\times \br^{m\times n}$ is called a stationary point of problem \eqref{ourRMC-noV-rewrite} if it satisfies the first-order necessary conditions:
\be\label{opt-cond}
0 = \grad_Uf(U, S), \quad 0 \in \nabla_Sf(U, S) + \partial h(S).
\ee
\end{definition}

According to Theorem 4.1 in \cite{yang2014optimality}, the optimality conditions of the subproblems \eqref{AManPG-general-dU} and \eqref{AManPG-general-dS} are
\be\label{opt-cond-sub}
\ba{lll}
0 = \grad_Uf(U^k, S^{k+1}) + \frac{1}{t_U}\Delta U^k,\\
0 \in \frac{1}{t_S}\Delta S^k + \nabla_Sf(U^k, S^k) + \partial h(S^k+\Delta S^k).
\ea
\ee
If $\Delta U^k=0$ and $\Delta S^k=0$, then we know that $S^{k+1}=S^k$, and $(U^k,S^k)$ satisfies \eqref{opt-cond} and thus is a stationary point of \eqref{ourRMC-noV-rewrite}. Therefore, we can use the norm of $(U^k,S^k)$ to measure the closeness to stationary point, and we define the $\epsilon$-stationary point of \eqref{ourRMC-noV-rewrite} as follows.

\begin{definition} {($\epsilon$-stationary point).}
We say that $(U^k,S^k)\in \mathcal{M}\times\br^{m\times n}$ is an $\epsilon$-stationary point of \eqref{ourRMC-noV-rewrite}, if $(\Delta S^k, \Delta U^k)$ given by \eqref{AManPG-general-dS} and \eqref{AManPG-general-dU} satisfies
\be\label{terminate-crit}\|\Delta S^k\|_F^2+\|\Delta U^k\|_F^2\leq \epsilon^2 /L^2,\ee 
where $L := \min\{L_S, L_U\}$. 
\end{definition} 
Now we are ready to analyze the iteration complexity of AManPG for obtaining an $\epsilon$-stationary point of \eqref{ourRMC-noV-rewrite}. First, we prove two lemmas, which show that there is a sufficient reduction of the objective value after each update of $S$ and $U$.

\begin{lemma} \label{lem1}
Assume Assumption \ref{assu-Lip-nabla-S} holds, and the sequence $(S^k,U^k,\Delta S^k, \Delta U^k)$ is generated by AManPG. By choosing $t_S = 1/L_S$ and $\alpha=1$, the following inequality holds
\be\label{lem1-ineq}
F(U^{k}, S^{k+1}) - F(U^{k}, S^k) \le -\frac{L_S}{2} \norm{\Delta S^k}_F^2.
\ee
\end{lemma}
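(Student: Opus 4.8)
The plan is to exploit the standard descent lemma for the composite proximal gradient step in the Euclidean space, applied to the $S$-variable while $U = U^k$ is frozen. First I would note that, by Assumption~\ref{assu-Lip-nabla-S}, the function $S \mapsto f(U^k, S)$ has an $L_S$-Lipschitz continuous gradient, so the quadratic upper bound
\[
f(U^k, S^{k+1}) \le f(U^k, S^k) + \langle \nabla_S f(U^k, S^k), \Delta S^k \rangle + \frac{L_S}{2}\norm{\Delta S^k}_F^2
\]
holds (using $\alpha = 1$, so $S^{k+1} = S^k + \Delta S^k$). Adding $h(S^{k+1})$ to both sides gives an upper bound on $F(U^k, S^{k+1})$ in terms of the linearized model plus the quadratic term.

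Next I would use the fact that $\Delta S^k$ is the exact minimizer of the subproblem \eqref{AManPG-general-dS}. Since that subproblem minimizes a $\frac{1}{t_S}$-strongly convex function (the sum of the linear term, $\frac{1}{2t_S}\norm{\cdot}_F^2$, and the convex $h$), comparing the optimal value at $\Delta S^k$ with the value at the feasible point $\Delta S = 0$ yields
\[
\langle \nabla_S f(U^k, S^k), \Delta S^k \rangle + \frac{1}{2t_S}\norm{\Delta S^k}_F^2 + h(S^k + \Delta S^k) \le h(S^k).
\]
(One can get the sharper strong-convexity version with an extra $\frac{1}{2t_S}\norm{\Delta S^k}_F^2$, but the plain optimality comparison already suffices here.) Substituting the choice $t_S = 1/L_S$, this reads $\langle \nabla_S f(U^k,S^k), \Delta S^k\rangle + h(S^k+\Delta S^k) \le h(S^k) - \frac{L_S}{2}\norm{\Delta S^k}_F^2$.

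Finally I would combine the two displays: plugging the second inequality into the first to eliminate the linear term and $h(S^{k+1}) = h(S^k+\Delta S^k)$, the $+\frac{L_S}{2}\norm{\Delta S^k}_F^2$ from the descent lemma cancels against $-\frac{L_S}{2}\norm{\Delta S^k}_F^2$ from optimality once, leaving
\[
F(U^k, S^{k+1}) \le f(U^k, S^k) + h(S^k) - \frac{L_S}{2}\norm{\Delta S^k}_F^2 = F(U^k, S^k) - \frac{L_S}{2}\norm{\Delta S^k}_F^2,
\]
which is exactly \eqref{lem1-ineq}. I do not anticipate a serious obstacle: the only mild subtlety is bookkeeping the constant and making sure the step size choice $t_S = 1/L_S$ is what makes the two quadratic terms cancel cleanly (with a larger $t_S$ one would need $t_S \le 1/L_S$ and would retain a positive multiple of $\norm{\Delta S^k}_F^2$ with the right sign). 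Since $h$ here is $\gamma\norm{\cP_\Omega(S)}_1$, convexity of $h$ used in the optimality comparison is immediate.
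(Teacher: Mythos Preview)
Your overall approach matches the paper's---descent lemma in $S$ plus optimality of the proximal subproblem---but there is a bookkeeping error that leaves a genuine gap. With only the \emph{plain} optimality comparison you wrote,
\[
\langle \nabla_S f(U^k,S^k), \Delta S^k\rangle + \tfrac{1}{2t_S}\norm{\Delta S^k}_F^2 + h(S^k+\Delta S^k) \le h(S^k),
\]
the term you extract is $-\tfrac{1}{2t_S}\norm{\Delta S^k}_F^2 = -\tfrac{L_S}{2}\norm{\Delta S^k}_F^2$. That exactly cancels the $+\tfrac{L_S}{2}\norm{\Delta S^k}_F^2$ from the descent lemma, so your final display would only yield $F(U^k,S^{k+1}) \le F(U^k,S^k)$, not the strict decrease \eqref{lem1-ineq}. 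The ``cancels once, leaving $-\tfrac{L_S}{2}$'' accounting is off by one copy of the quadratic.

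The fix is precisely the step you dismissed: you must use the $\tfrac{1}{t_S}$-strong convexity of the subproblem objective, which, together with the first-order optimality $0\in\partial g_{U^k,S^k}(\Delta S^k)$, gives
\[
g_{U^k,S^k}(0) \ge g_{U^k,S^k}(\Delta S^k) + \tfrac{1}{2t_S}\norm{\Delta S^k}_F^2,
\]
i.e.\ an \emph{additional} $\tfrac{1}{2t_S}\norm{\Delta S^k}_F^2$ on top of the plain comparison. This yields $\langle \nabla_S f(U^k,S^k), \Delta S^k\rangle + h(S^k+\Delta S^k) \le h(S^k) - \tfrac{1}{t_S}\norm{\Delta S^k}_F^2$, and then combining with the descent lemma gives $(\tfrac{L_S}{2} - \tfrac{1}{t_S})\norm{\Delta S^k}_F^2 = -\tfrac{L_S}{2}\norm{\Delta S^k}_F^2$, as in the paper's proof. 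So the strong-convexity sharpening is not optional here; it supplies the missing half.
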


\begin{proof}
Please see Appendix~\ref{proof:lemma1} for details.
\end{proof}

\begin{lemma}\label{lem2}
Assume Assumption \ref{assu-pullback} holds, and the sequence $(S^k,U^k,\Delta S^k, \Delta U^k)$ is generated by AManPG. By choosing $t_U = 1/L_U$ and $\beta=1$, the following inequality holds
\be\label{lem2-ineq}
F(U^{k+1}, S^{k+1}) - F(U^{k}, S^{k+1}) \le -\frac{L_U}{2} \norm{\Delta U^k}_F^2.
\ee
\end{lemma}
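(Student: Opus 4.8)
The plan is to mirror the proof of Lemma~\ref{lem1} but for the $U$-update, relying on the descent-lemma-type bound that Assumption~\ref{assu-pullback} provides for the pullback $\hat f_k$. First I would write down what the $U$-update does: since $h$ does not involve $U$, we have $F(U^{k+1},S^{k+1}) - F(U^k,S^{k+1}) = f(U^{k+1},S^{k+1}) - f(U^k,S^{k+1})$. Using $U^{k+1} = \Retr_{U^k}(U^k+\beta\Delta U^k)$ with $\beta=1$, i.e.\ $U^{k+1}=\Retr_{U^k}(\Delta U^k)$ in the notation of Assumption~\ref{assu-pullback} (here I would note the mild abuse that $\Delta U^k\in\T_{U^k}\M$ is the tangent vector being retracted), this equals $\hat f_k(\Delta U^k) - f(U^k,S^{k+1})$.

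Next I would apply Assumption~\ref{assu-pullback} with $\Delta U = \Delta U^k$ to get
\[
\hat f_k(\Delta U^k) - f(U^k,S^{k+1}) \le \langle \Delta U^k, \grad_U f(U^k,S^{k+1})\rangle + \frac{L_U}{2}\norm{\Delta U^k}_F^2.
\]
Then I would invoke the first optimality condition in \eqref{opt-cond-sub}, namely $\grad_U f(U^k,S^{k+1}) = -\frac{1}{t_U}\Delta U^k$, which gives $\langle \Delta U^k, \grad_U f(U^k,S^{k+1})\rangle = -\frac{1}{t_U}\norm{\Delta U^k}_F^2$. Substituting the choice $t_U = 1/L_U$, this term becomes $-L_U\norm{\Delta U^k}_F^2$, so the right-hand side is $-L_U\norm{\Delta U^k}_F^2 + \frac{L_U}{2}\norm{\Delta U^k}_F^2 = -\frac{L_U}{2}\norm{\Delta U^k}_F^2$, which is exactly \eqref{lem2-ineq}.

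An alternative route, if one does not want to lean on \eqref{opt-cond-sub} directly, is to use that $\Delta U^k$ is the minimizer in \eqref{AManPG-general-dU}: comparing the objective value at $\Delta U^k$ with its value at $\Delta U = 0$ gives $\langle \grad_U f(U^k,S^{k+1}),\Delta U^k\rangle + \frac{1}{2t_U}\norm{\Delta U^k}_F^2 \le 0$ (the projection onto the tangent space makes $\langle \nabla_U f, \Delta U^k\rangle = \langle \grad_U f, \Delta U^k\rangle$), hence $\langle \grad_U f(U^k,S^{k+1}),\Delta U^k\rangle \le -\frac{1}{2t_U}\norm{\Delta U^k}_F^2$; plugging this into the pullback bound with $t_U = 1/L_U$ yields the same conclusion. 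I expect the only subtlety, rather than a genuine obstacle, to be bookkeeping around the retraction: making sure the quantity $\Delta U^k$ fed into Assumption~\ref{assu-pullback} is precisely the tangent vector retracted in \eqref{AManPG-general-U+} (so that $\beta=1$ is used consistently), and that the assumption is stated along exactly the sequence generated by the algorithm so it applies at iteration $k$. No Lipschitz constant for $\grad_U f$ on the manifold is needed since Assumption~\ref{assu-pullback} already encapsulates the needed second-order control of $f$ through the retraction.
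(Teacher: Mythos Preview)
Your proposal is correct and follows essentially the same argument as the paper's proof: reduce $F$ to $f$ since $h$ is independent of $U$, apply the pullback descent bound of Assumption~\ref{assu-pullback} at $\Delta U^k$, and then substitute the optimality condition $\grad_U f(U^k,S^{k+1}) = -\tfrac{1}{t_U}\Delta U^k$ from \eqref{opt-cond-sub} with $t_U=1/L_U$. The only cosmetic difference is that the paper writes the first step as an inequality rather than the equality you (correctly) note it to be, and it does not spell out the retraction-notation bookkeeping that you flag; your alternative route via the minimizer comparison is a harmless variant yielding the same bound.
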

\begin{proof}
Please see Appendix~\ref{proof:lemma2} for details.
\end{proof}

Now we are ready to present our main convergence result of AManPG. 
\begin{theorem}\label{thm1}
Assume Assumptions \ref{assu-lower-bounded}, \eqref{assu-Lip-nabla-S} and \eqref{assu-pullback} hold. By choosing $t_U = 1/L_U$, $t_S = 1/L_S$, $\alpha=\beta=1$ in AManPG \eqref{AManPG-general}, every limit point of the sequence $\{U^k,S^k\}$ generated by AManPG \eqref{AManPG-general} is a stationary point of problem \eqref{ourRMC-noV-rewrite}. Moreover, AManPG \eqref{AManPG-general} returns an $\epsilon$-stationary point of problem \eqref{ourRMC-noV-rewrite} in at most $\lceil 2L(F(U^0,S^0) - F^*) / \epsilon^2\rceil$ iterations, where $L := \min(L_S, L_U)$.
\end{theorem}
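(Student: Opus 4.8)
The plan is to glue the two sufficient-decrease estimates of Lemma~\ref{lem1} and Lemma~\ref{lem2} into a single per-iteration decrease, telescope it, and then read off both conclusions. First I would add \eqref{lem1-ineq} and \eqref{lem2-ineq}: since their left-hand sides chain as $\big(F(U^{k+1},S^{k+1})-F(U^k,S^{k+1})\big)+\big(F(U^k,S^{k+1})-F(U^k,S^k)\big)$, this gives
\[
F(U^{k+1},S^{k+1}) - F(U^k,S^k) \le -\tfrac{L_S}{2}\|\Delta S^k\|_F^2 - \tfrac{L_U}{2}\|\Delta U^k\|_F^2 \le -\tfrac{L}{2}\big(\|\Delta S^k\|_F^2 + \|\Delta U^k\|_F^2\big),
\]
with $L=\min(L_S,L_U)$. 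In particular $\{F(U^k,S^k)\}$ is nonincreasing and, by Assumption~\ref{assu-lower-bounded}, bounded below by $F^*$, hence convergent. Summing over $k=0,\dots,N-1$ and using $F(U^N,S^N)\ge F^*$ yields the master inequality
\[
\tfrac{L}{2}\sum_{k=0}^{N-1}\big(\|\Delta S^k\|_F^2 + \|\Delta U^k\|_F^2\big) \le F(U^0,S^0) - F^*.
\]

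For the limit-point statement, letting $N\to\infty$ shows the series $\sum_k\big(\|\Delta S^k\|_F^2+\|\Delta U^k\|_F^2\big)$ converges, so $\|\Delta S^k\|_F\to 0$, $\|\Delta U^k\|_F\to 0$, and $S^{k+1}-S^k=\alpha\Delta S^k\to 0$. Take any limit point $(U^\ast,S^\ast)$ of $\{(U^k,S^k)\}$, say along a subsequence. Passing to the limit in the subproblem optimality conditions \eqref{opt-cond-sub}: from $0=\grad_U f(U^k,S^{k+1})+\tfrac{1}{t_U}\Delta U^k$, using $\Delta U^k\to 0$, $S^{k+1}\to S^\ast$, and the joint continuity of $\grad_U f$, one gets $\grad_U f(U^\ast,S^\ast)=0$; from $0\in\tfrac{1}{t_S}\Delta S^k+\nabla_S f(U^k,S^k)+\partial h(S^k+\Delta S^k)$, using $\Delta S^k\to 0$, continuity of $\nabla_S f$, $S^k+\Delta S^k\to S^\ast$, and the outer semicontinuity (closed graph) of the subdifferential $\partial h$ of the convex function $h$, one gets $0\in\nabla_S f(U^\ast,S^\ast)+\partial h(S^\ast)$. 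Hence $(U^\ast,S^\ast)$ satisfies \eqref{opt-cond} and is stationary.

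For the complexity bound, the master inequality gives
\[
\min_{0\le k\le N-1}\big(\|\Delta S^k\|_F^2 + \|\Delta U^k\|_F^2\big) \le \frac{2\big(F(U^0,S^0)-F^*\big)}{NL}.
\]
Demanding the right-hand side to be at most $\epsilon^2/L^2$ — exactly the $\epsilon$-stationarity criterion \eqref{terminate-crit} — amounts to $N\ge 2L\big(F(U^0,S^0)-F^*\big)/\epsilon^2$. Thus after at most $\lceil 2L(F(U^0,S^0)-F^*)/\epsilon^2\rceil$ iterations some iterate is $\epsilon$-stationary, which is what the break condition of Algorithm~\ref{alg:Amanpg} detects.

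The bulk of the argument is bookkeeping with the two lemmas; the only genuinely delicate point is the limit-point part. There one must justify that $\grad_U f$ is jointly continuous in $(U,S)$ — which I would obtain from the smoothness of $f$ together with the explicit formula \eqref{grad_Uf}, noting that $V_{U,S}$ depends continuously on $(U,S)$ through \eqref{Compute-V} since the matrix $A$ is invertible — and that $\partial h$ has closed graph so the subgradient inclusion survives the limit. (For the concrete problem \eqref{ourRMC-noV}, limit points exist since $\{U^k\}$ lies on the compact Grassmann manifold, while $\{S^k\}$ stays bounded because $F$ is nonincreasing and $\cP_{\bar\Omega}(S^k)$ never changes.)
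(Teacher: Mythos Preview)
Your proof is correct and follows essentially the same route as the paper: combine Lemmas~\ref{lem1} and~\ref{lem2} into a single per-iteration decrease, use Assumption~\ref{assu-lower-bounded} to telescope, and read off both the limit-point statement and the iteration bound. If anything, you are more careful than the paper in the limit-point argument---the paper simply asserts that $\|\Delta S^k\|_F^2+\|\Delta U^k\|_F^2\to 0$ implies every limit point is stationary, whereas you actually justify the passage to the limit in \eqref{opt-cond-sub} via continuity of the gradients and the closed graph of $\partial h$.
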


\begin{proof}
Please see Appendix~\ref{proof:theorem1} for details.
\end{proof}

\section{Numerical Experiments}\label{sec:numerical}

\begin{figure*}[ht]
	\begin{center}
		\minipage{0.25\textwidth}
		\subfigure[Case 1]{\includegraphics[width=0.9\linewidth]{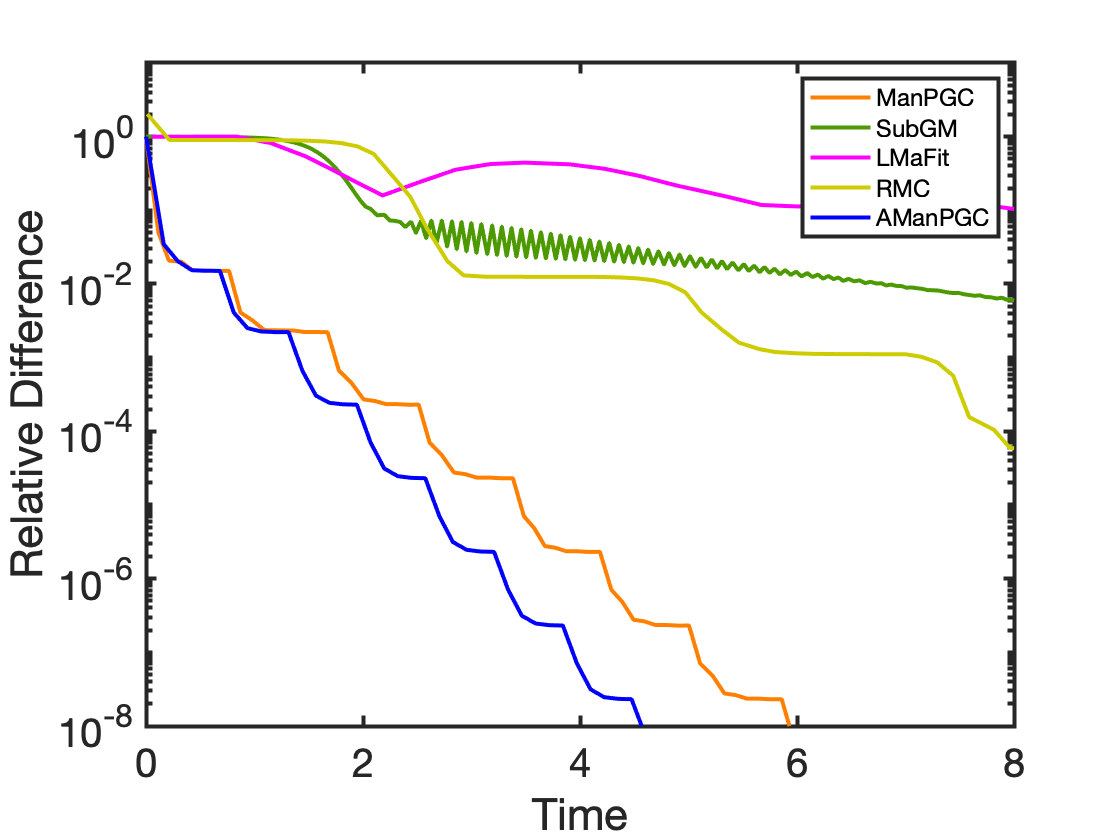}}
		\endminipage\hfill
		\minipage{0.25\textwidth}
		\subfigure[Case 2]{\includegraphics[width=0.9\linewidth]{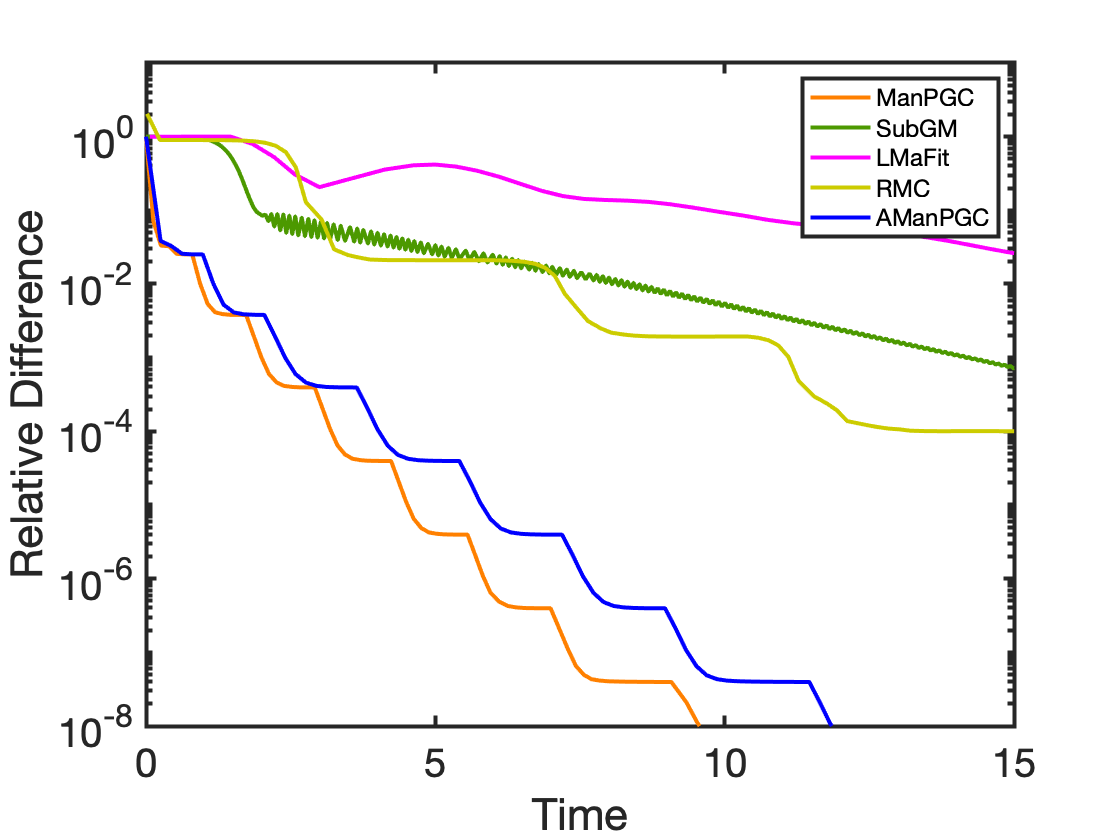}}
		\endminipage\hfill
		\minipage{0.25\textwidth}
		\subfigure[Case 3]{\includegraphics[width=0.9\linewidth]{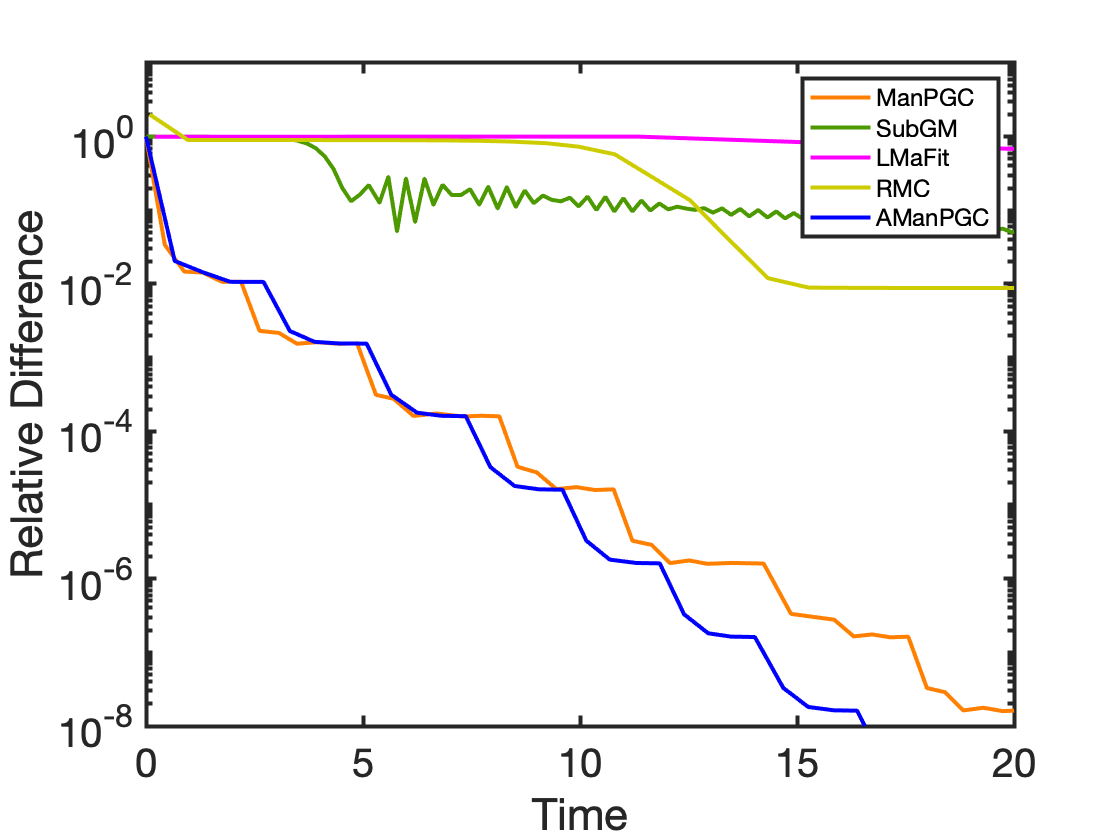}}
		\endminipage\hfill
		\minipage{0.25\textwidth}
		\subfigure[Case 4]{\includegraphics[width=0.9\linewidth]{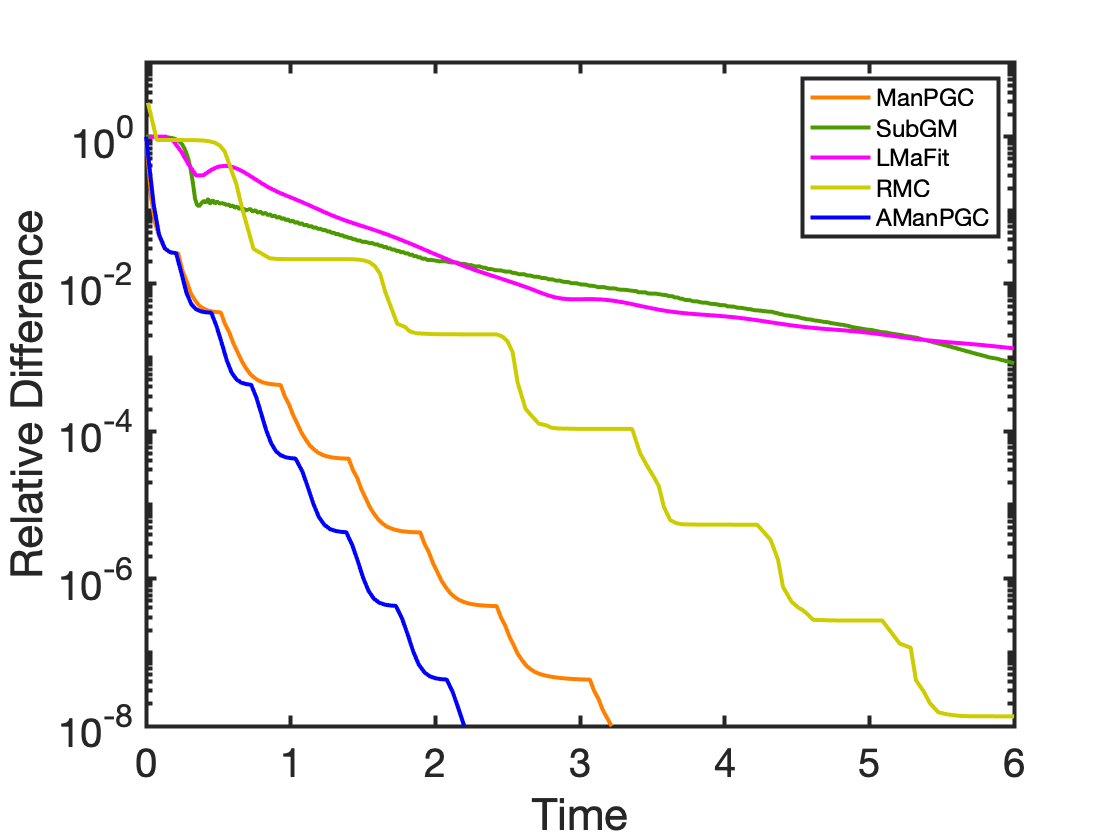}}
		\endminipage\hfill
		\minipage{0.25\textwidth}
		\subfigure[Case 1]{\includegraphics[width=0.9\linewidth]{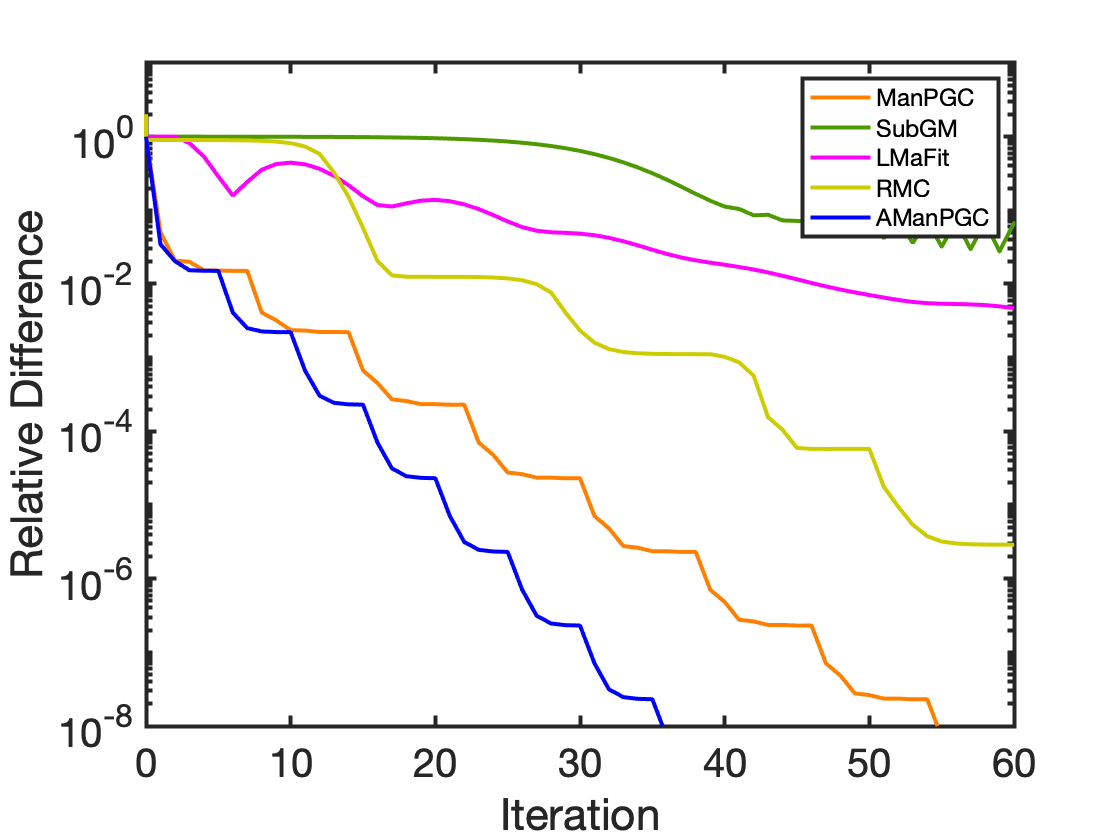}}
		\endminipage\hfill
		\minipage{0.25\textwidth}
		\subfigure[Case 2]{\includegraphics[width=0.9\linewidth]{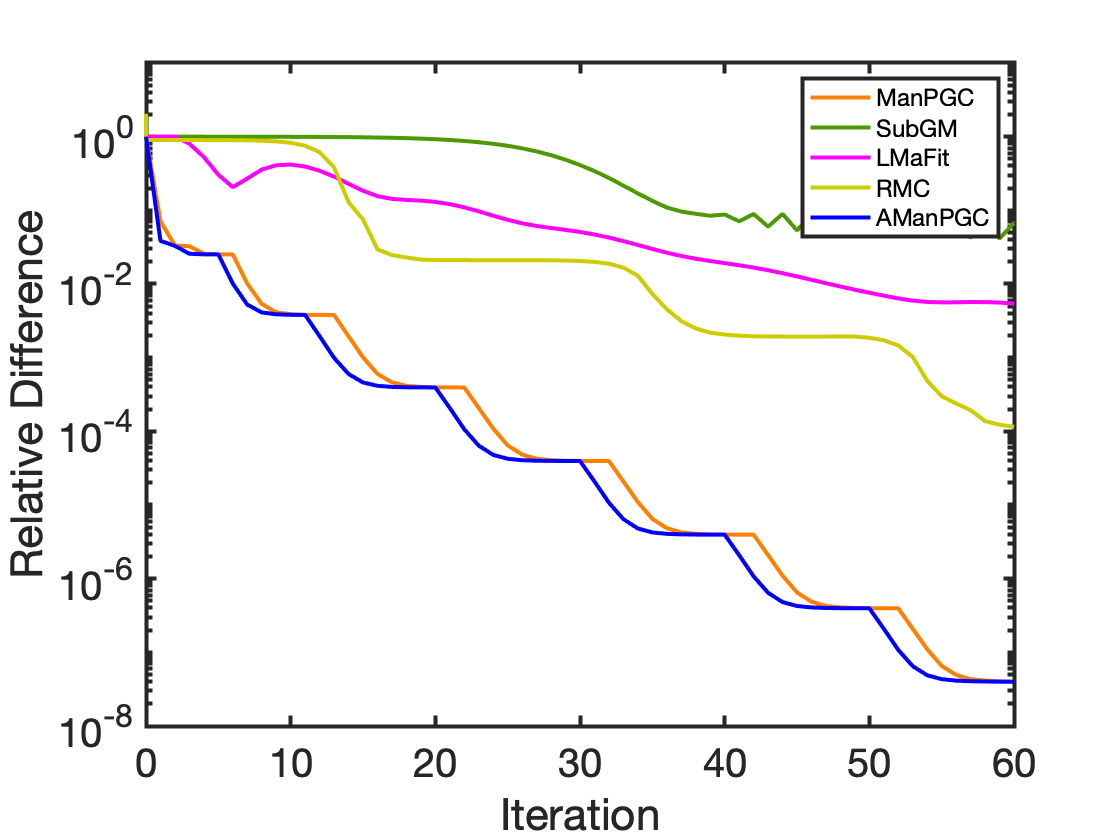}}
		\endminipage\hfill
		\minipage{0.25\textwidth}
		\subfigure[Case 3]{\includegraphics[width=0.9\linewidth]{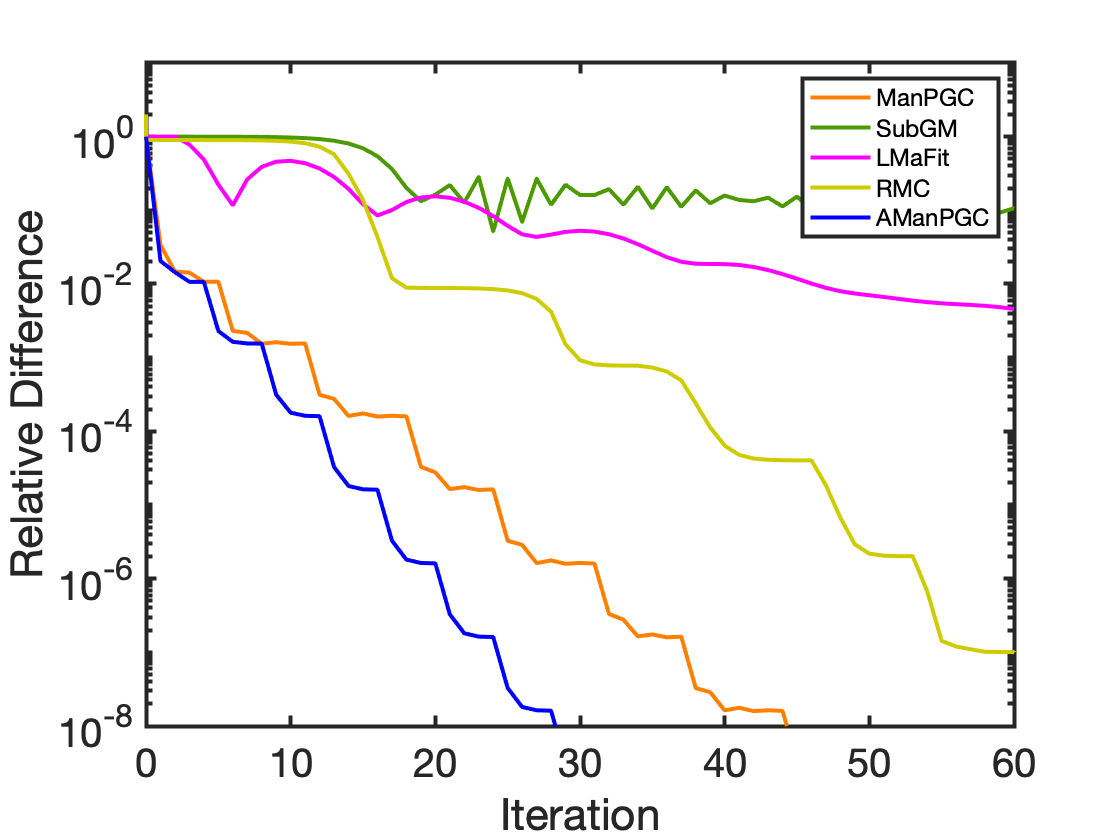}}
		\endminipage\hfill
		\minipage{0.25\textwidth}
		\subfigure[Case 4]{\includegraphics[width=0.9\linewidth]{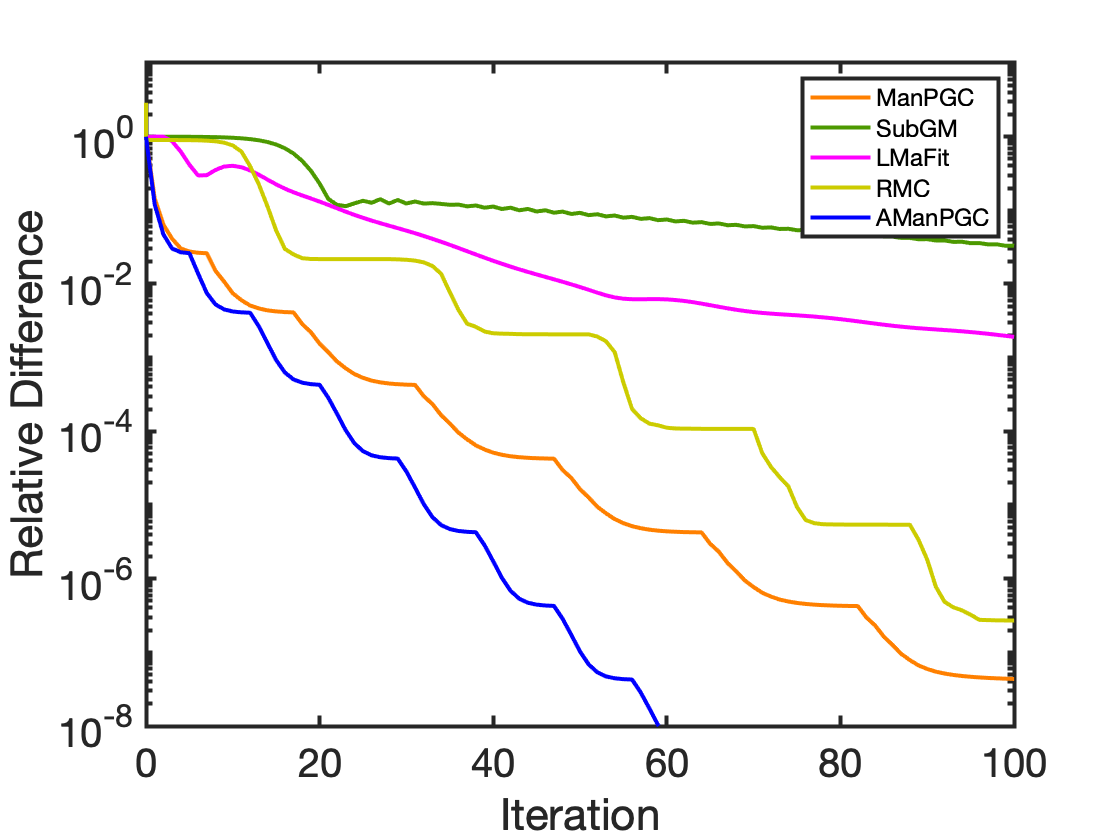}}
		\endminipage\hfill
		\minipage{0.25\textwidth}
		\subfigure[Case 5]{\includegraphics[width=0.9\linewidth]{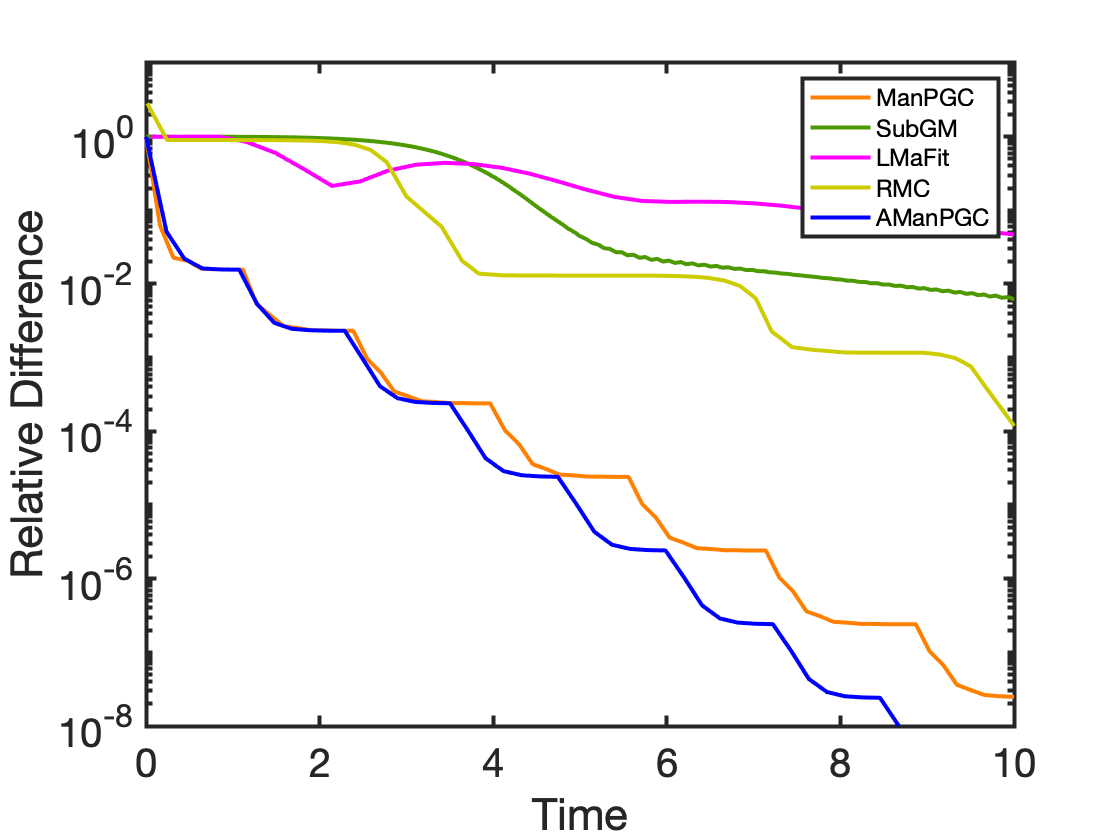}}
		\endminipage\hfill
		\minipage{0.25\textwidth}
		\subfigure[Case 6]{\includegraphics[width=0.9\linewidth]{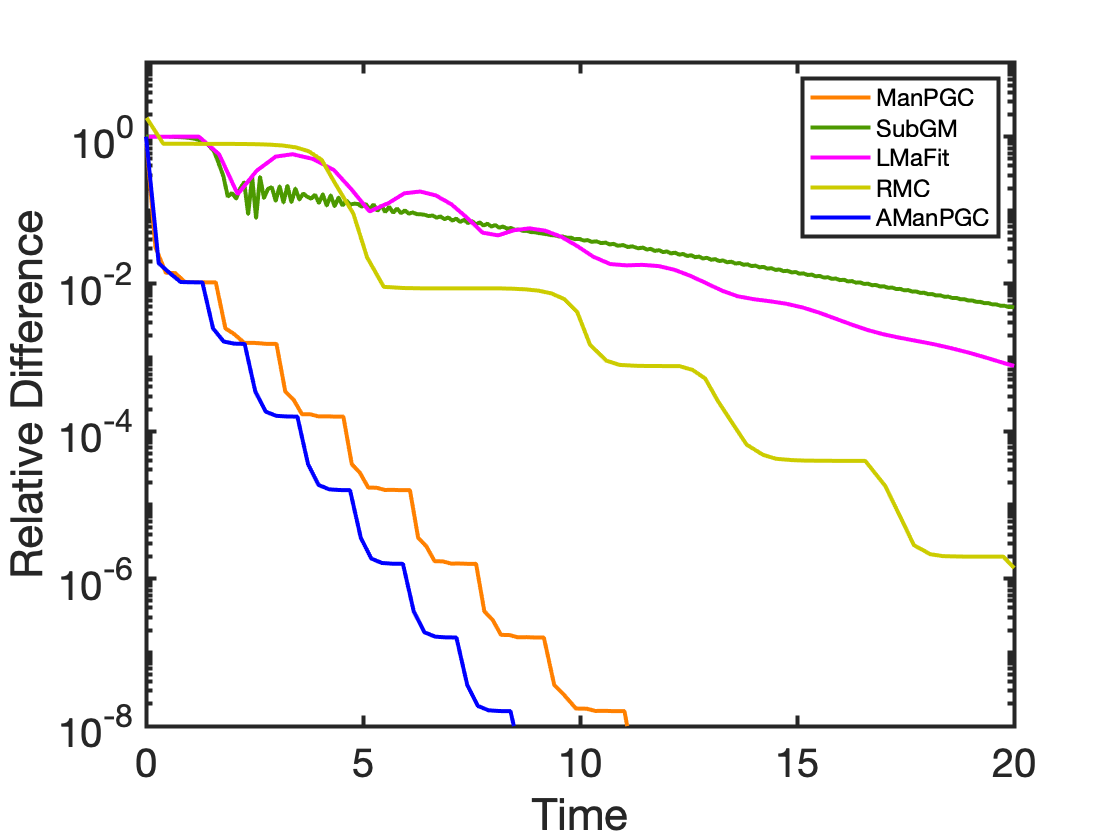}}
		\endminipage\hfill
		\minipage{0.25\textwidth}
		\subfigure[Case 7]{\includegraphics[width=0.9\linewidth]{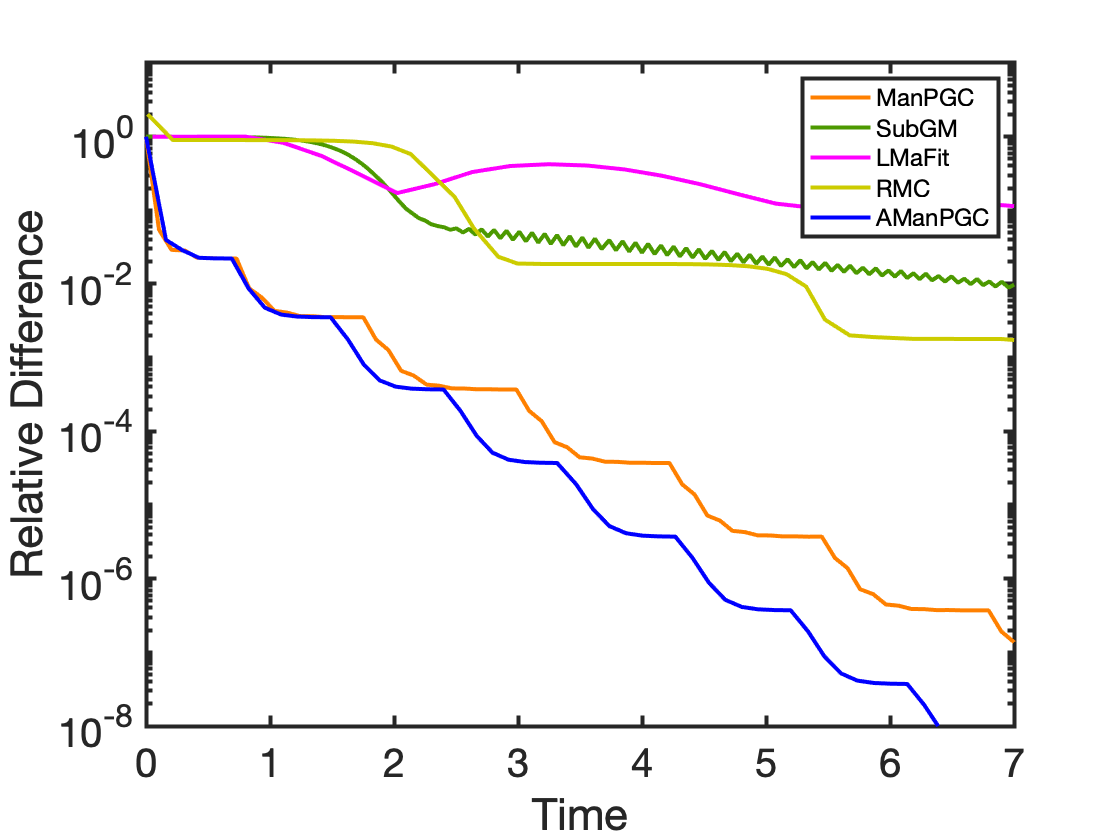}}
		\endminipage\hfill
		\minipage{0.25\textwidth}
		\subfigure[Case 8]{\includegraphics[width=0.9\linewidth]{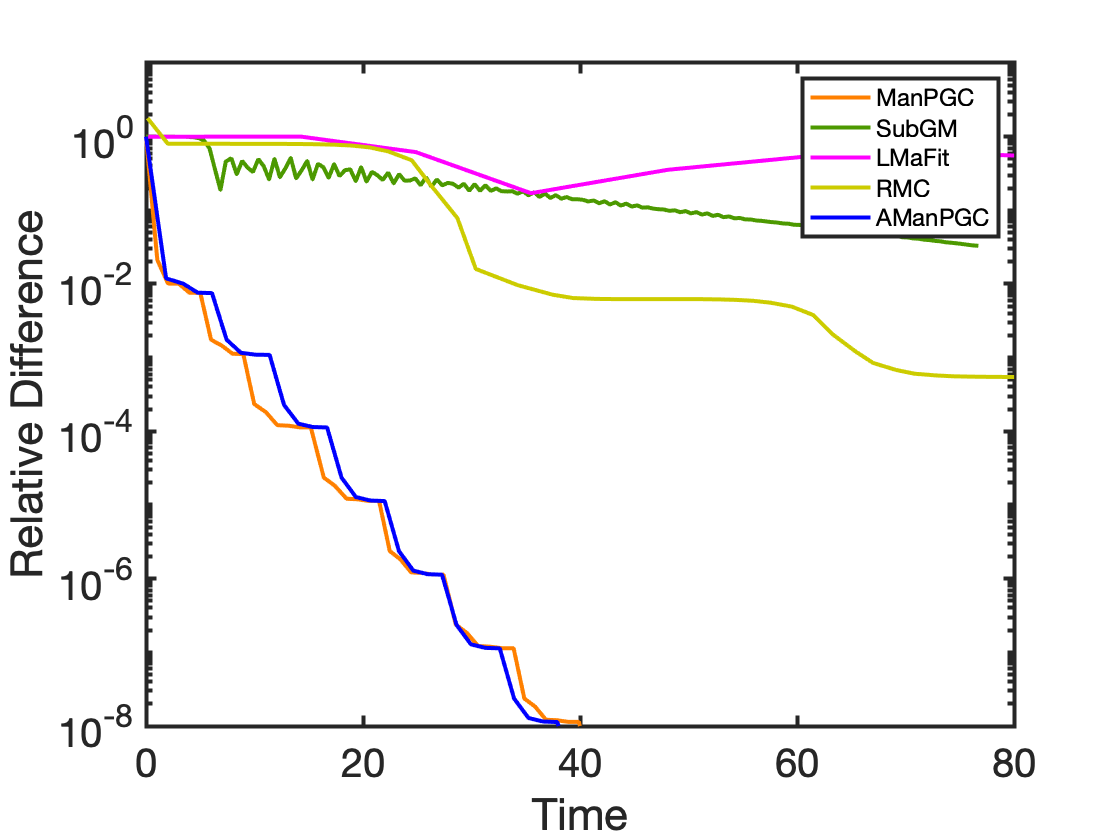}}
		\endminipage\hfill
		\minipage{0.25\textwidth}
		\subfigure[Case 5]{\includegraphics[width=0.9\linewidth]{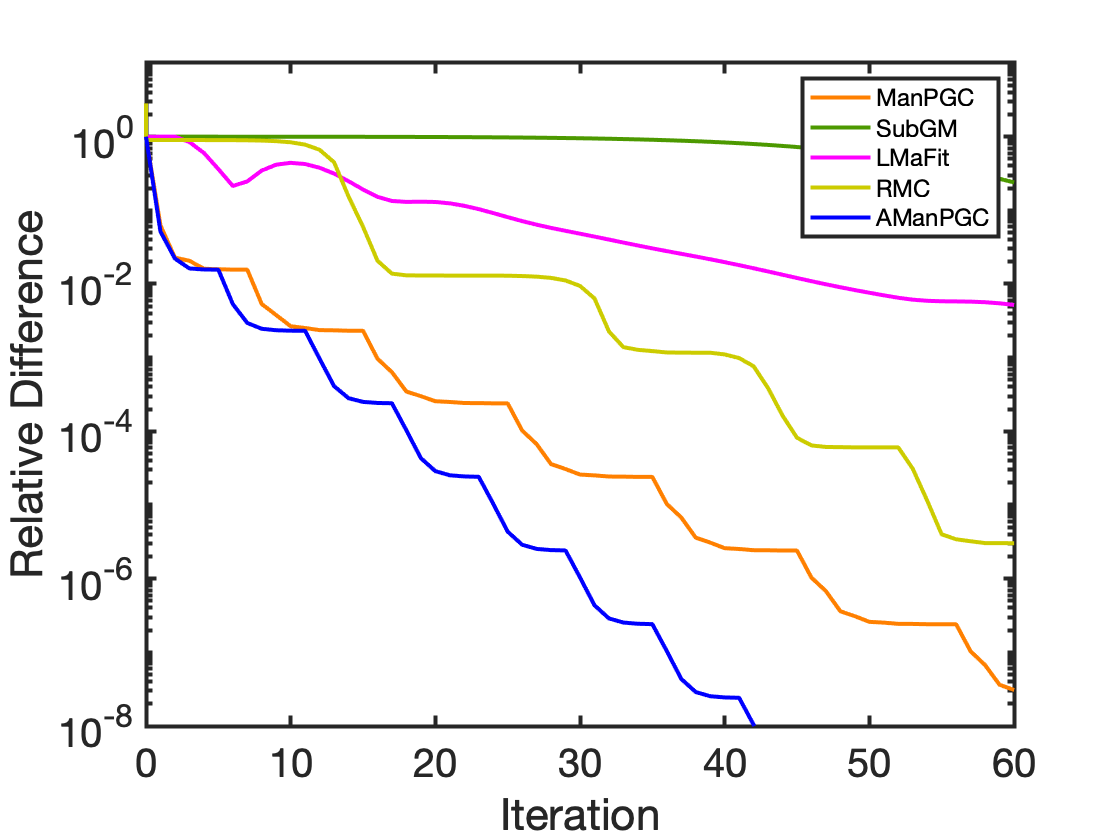}}
		\endminipage\hfill
		\minipage{0.25\textwidth}
		\subfigure[Case 6]{\includegraphics[width=0.9\linewidth]{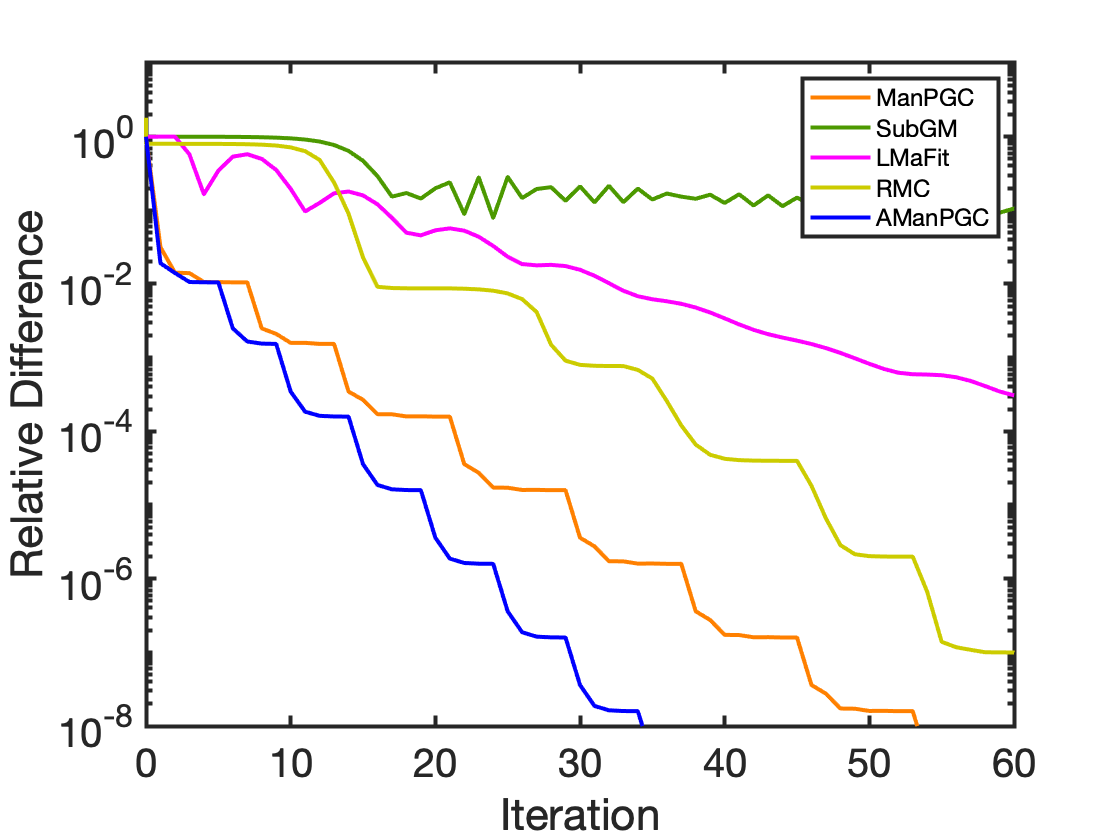}}
		\endminipage\hfill
		\minipage{0.25\textwidth}
		\subfigure[Case 7]{\includegraphics[width=0.9\linewidth]{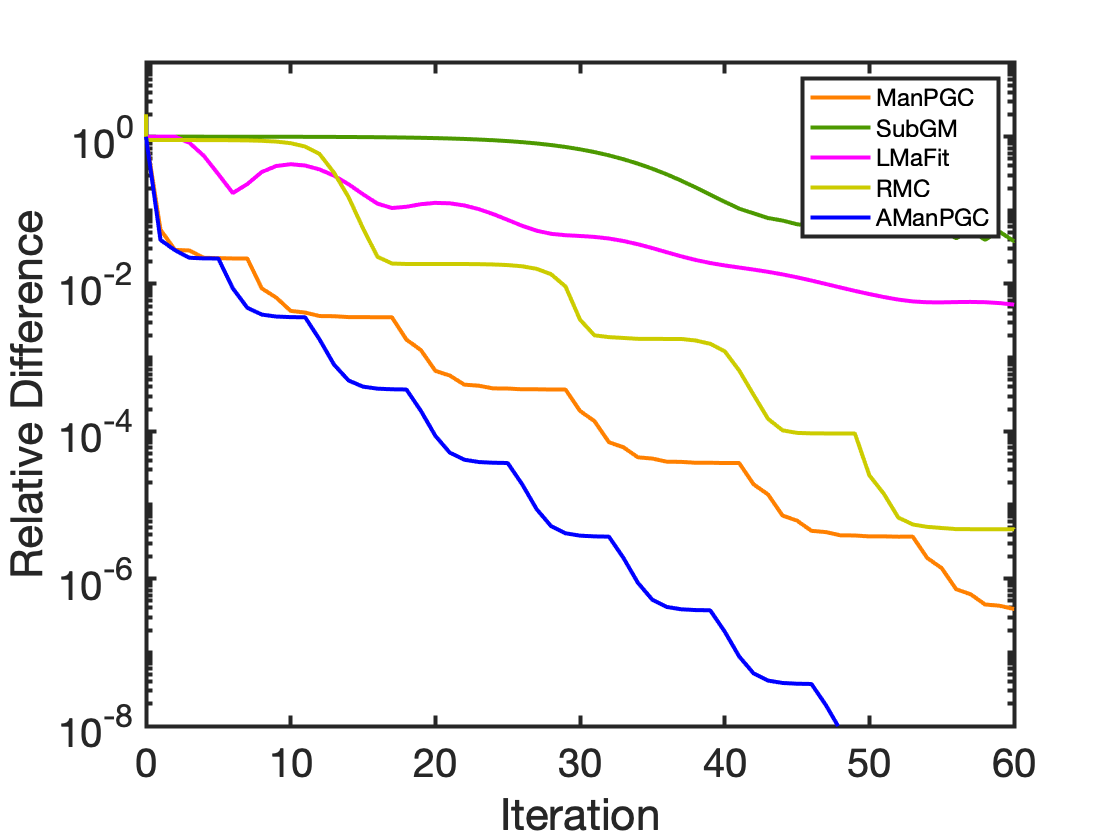}}
		\endminipage\hfill
		\minipage{0.25\textwidth}
		\subfigure[Case 8]{\includegraphics[width=0.9\linewidth]{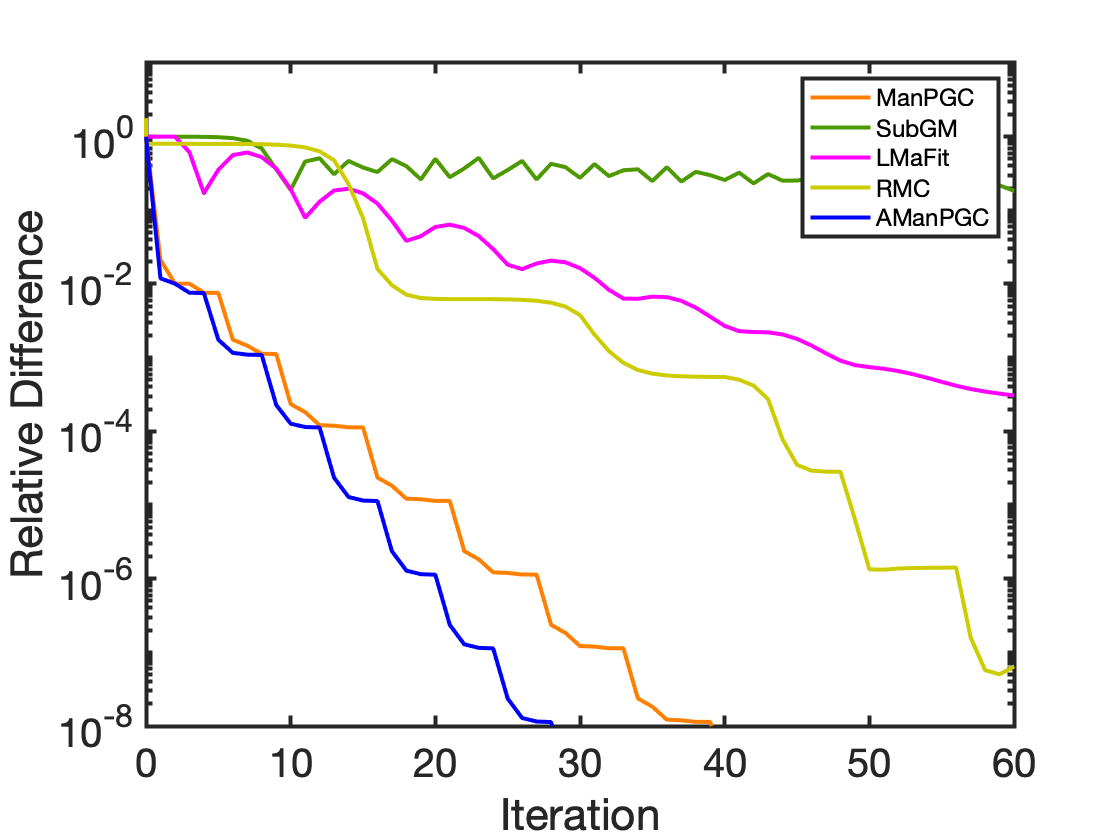}}
		\endminipage\hfill
		\caption{Relative difference for synthetic data on different cases. (a), (b), (c), (d), (i), (j), (k), (l) present the CPU time comparison; (e), (f), (g), (h), (m), (n), (o), (p) present the running iteration comparison.}
		\label{fig:synthetic}
	\end{center}
	\vskip -0.1in
\end{figure*}

In this section, we provide numerical results for both synthetic and real datasets to verify the performance of the proposed algorithms. We focus on comparing our ManPG and AManPG algorithms with some baseline algorithms using the robustness of $\ell_1$-norm, in particular, the subgradient method (SubGM) \cite{li2018nonconvex}, the LMaFit \cite{shen2014augmented} and the Riemannian conjugate gradient method for the smoothed $\ell_1$-norm objective function (RMC) \cite{cambier2016robust}. We use the same continuation framework for ManPG and call it ManPGC. For the SubGM method, we assume that the linear operator $\A$ is a simple projection. For the LMaFit algorithm, we turn off the rank estimation since we assume that the rank is known for all cases. We use the original setting for the RMC algorithm.  All algorithms use same C-Mex code for accelerating the matrix multiplication between a sparse matrix and a full matrix and some other bottleneck computations. Moreover, to guarantee a fair comparison, each algorithm is carefully tuned to achieve its best performance. All experiments were run on Matlab R2018b with a 2.3 GHz Dual-Core Intel Core i5 CPU.

\subsection{Synthetic Data}

We first test our ManPGC and AManPGC algorithms on different cases of synthetic data. After picking the values of $m, n, r$, we generate the ground truth $U^* \in \R^{m \times r}$,  $V^* \in \R^{r \times n}$ with i.i.d. normal entries of zero mean and unit variance. The target matrix is $X^* =U^*V^*$. We then choose a sampling ratio and sample entries uniformly at random to get the observed matrix $M$. Finally, we add a sparse matrix $S^*$, whose nonzero entries are generated by a normal distribution with zero mean and unit variance with a sparsity rate, to the observed matrix $M$.  

\textbf{Parameters:} In our experiment, we observe that by setting $t_S = 1$, we get very good performance. Due to the problem formulation \eqref{ourRMC}, $t_S = 1$ gives us a direct proximal mapping for the $S$ subproblem when we fix $V_{U^k, S^k}$. We tune $t_U$ between $1/\lvert \Omega \rvert$ and $3/\lvert \Omega \rvert$ and set $\gamma_0 = 10, \lambda = 10^{-8}, \mu_1=\mu_2=1/10$ in all experiments. We use different $\epsilon_0$ values specified in the following cases. We may use any random matrix $U_0$ as our initial point, but in practice, we use the singular value decomposition of the observed matrix $M$ as our initial point for all algorithms. 
\begin{figure*}[t]
	\centering
	\minipage{0.16\textwidth}
	\subfigure[Original Image]{\includegraphics[width=0.9\linewidth]{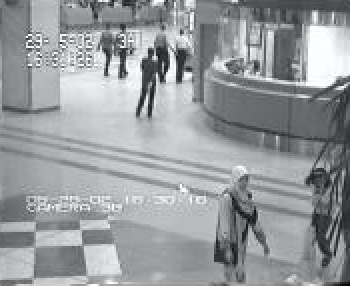}}
	\endminipage\hfill
	\minipage{0.16\textwidth}
	\subfigure[ManPGC]{\includegraphics[width=0.9\linewidth]{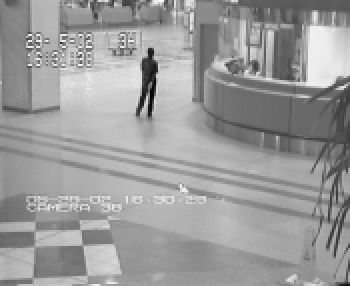}}
	\endminipage\hfill
	\minipage{0.16\textwidth}
	\subfigure[AManPGC]{\includegraphics[width=0.9\linewidth]{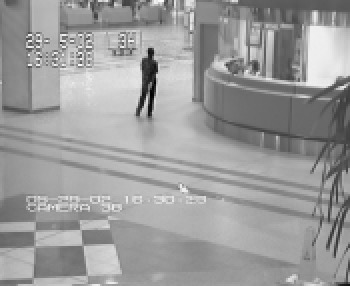}}
	\endminipage\hfill
	\minipage{0.16\textwidth}
	\subfigure[RMC]{\includegraphics[width=0.9\linewidth]{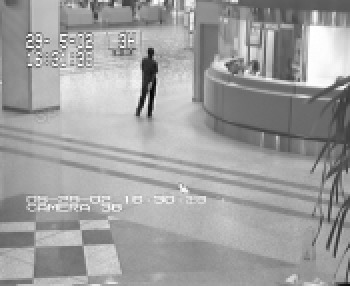}}
	\endminipage\hfill
	\minipage{0.16\textwidth}
	\subfigure[LMaFit]{\includegraphics[width=0.9\linewidth]{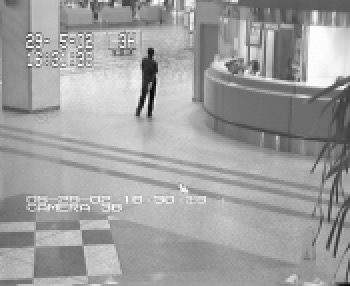}}
	\endminipage\hfill
	\minipage{0.16\textwidth}
	\subfigure[SubGM]{\includegraphics[width=0.9\linewidth]{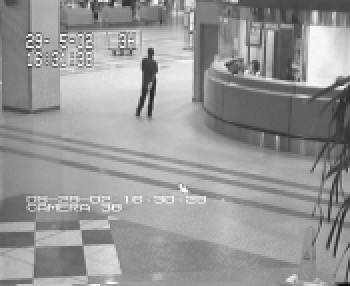}}
	\endminipage\hfill\\
	\minipage{0.16\textwidth}
	\subfigure[Original Image]{\includegraphics[width=0.9\linewidth]{hall_original.png}}
	\endminipage\hfill
	\minipage{0.16\textwidth}
	\subfigure[ManPGC]{\includegraphics[width=0.9\linewidth]{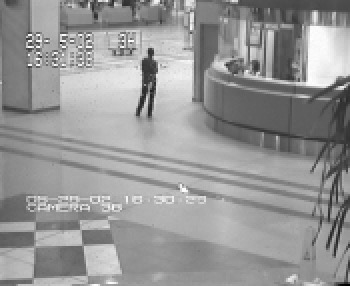}}
	\endminipage\hfill
	\minipage{0.16\textwidth}
	\subfigure[AManPGC]{\includegraphics[width=0.9\linewidth]{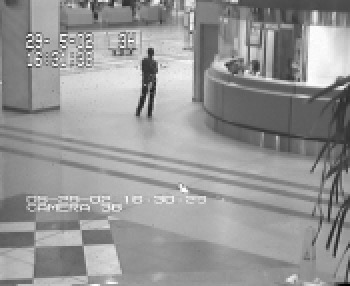}}
	\endminipage\hfill
	\minipage{0.16\textwidth}
	\subfigure[RMC]{\includegraphics[width=0.9\linewidth]{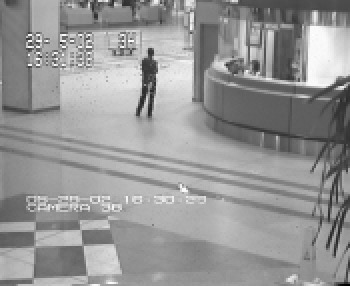}}
	\endminipage\hfill
	\minipage{0.16\textwidth}
	\subfigure[LMaFit]{\includegraphics[width=0.9\linewidth]{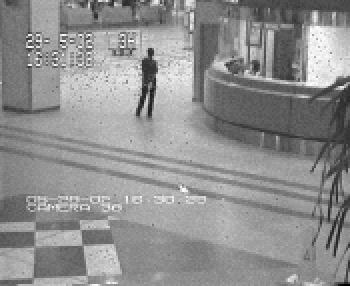}}
	\endminipage\hfill
	\minipage{0.16\textwidth}
	\subfigure[SubGM]{\includegraphics[width=0.9\linewidth]{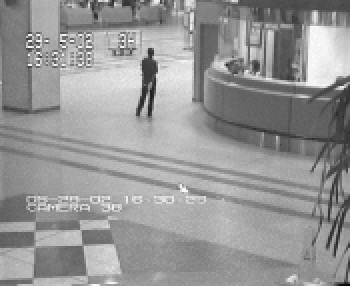}}
	\endminipage\hfill
	\caption{Background estimation for ``Hall of a business building'' video data. The first row are recovered from $50\%$ observed pixels and the second row are recovered from $10\%$ observed pixels . (a), (g) One of the original image frame. (b), (h) Background frame estimated by ManPGC. (c), (i) Background frame estimated by AManPGC. (d), (j) Background frame estimated by RMC \cite{cambier2016robust}. (e), (k) Background frame estimated by LMaFit \cite{shen2014augmented}. (f), (l) Background frame estimated by SubGM \cite{li2018nonconvex}.}
	\label{fig:real_hall}
	\vskip -0.1in
\end{figure*}

\begin{table*}[h] \renewcommand{\arraystretch}{1.3} \caption{CPU time and iteration number comparison (Dataset 1).} \label{table:hall}
\centering
\begin{tabular}{lccccr}
\hline
Algorithm 			& AManPGC 	& ManPGC 	& RMC 	& LMaFit 	& SubGM\\
\hline
CPU time (50\%) 		&1.53		&1.67	&3.55	&15.90	&5.78\\
\hline
Iteration number (50\%)	&9			&10		&14		&31		&50\\
\hline
CPU time (10\%) 		&1.02		&0.91	&1.56	&13.44	&1.07\\
\hline
Iteration number (10\%) 	&11			&12		&30		&27		&60\\
\hline
\end{tabular}
\end{table*}

\begin{figure*}[h]
	\begin{center}
		\minipage{0.16\textwidth}
		\subfigure[Original Image]{\includegraphics[width=0.9\linewidth]{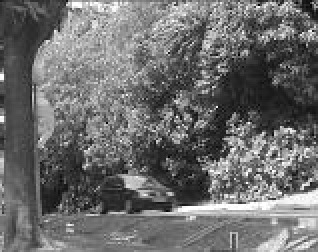}}
		\endminipage\hfill
		\minipage{0.16\textwidth}
		\subfigure[ManPGC]{\includegraphics[width=0.9\linewidth]{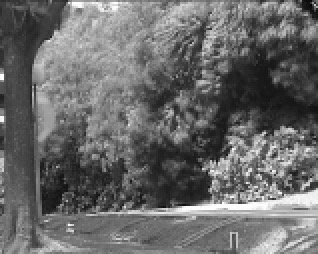}}
		\endminipage\hfill
		\minipage{0.16\textwidth}
		\subfigure[AManPGC]{\includegraphics[width=0.9\linewidth]{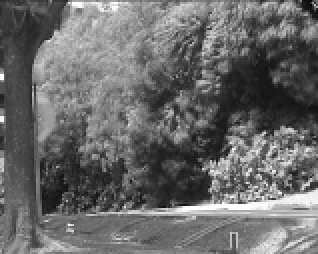}}
		\endminipage\hfill
		\minipage{0.16\textwidth}
		\subfigure[RMC]{\includegraphics[width=0.9\linewidth]{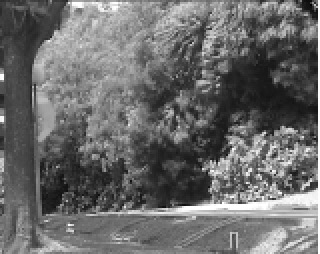}}
		\endminipage\hfill
		\minipage{0.16\textwidth}
		\subfigure[LMaFit]{\includegraphics[width=0.9\linewidth]{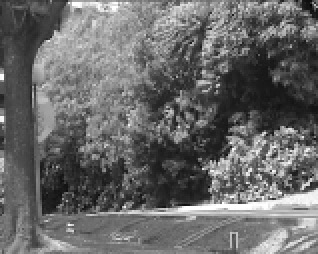}}
		\endminipage\hfill
		\minipage{0.16\textwidth}
		\subfigure[SubGM]{\includegraphics[width=0.9\linewidth]{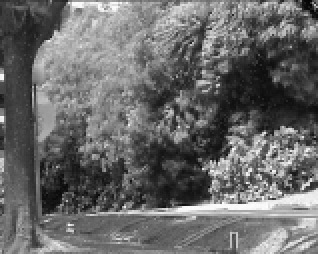}}
		\endminipage\hfill
		\minipage{0.16\textwidth}
		\subfigure[Original Image]{\includegraphics[width=0.9\linewidth]{campus_original.png}}
		\endminipage\hfill
		\minipage{0.16\textwidth}
		\subfigure[ManPGC]{\includegraphics[width=0.9\linewidth]{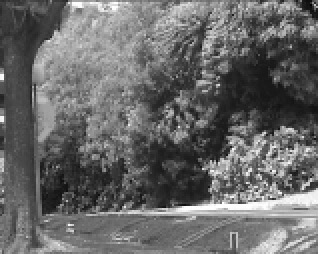}}
		\endminipage\hfill
		\minipage{0.16\textwidth}
		\subfigure[AManPGC]{\includegraphics[width=0.9\linewidth]{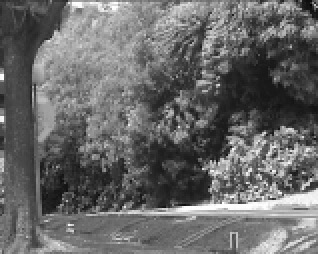}}
		\endminipage\hfill
		\minipage{0.16\textwidth}
		\subfigure[RMC]{\includegraphics[width=0.9\linewidth]{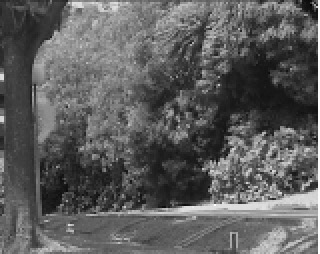}}
		\endminipage\hfill
		\minipage{0.16\textwidth}
		\subfigure[LMaFit]{\includegraphics[width=0.9\linewidth]{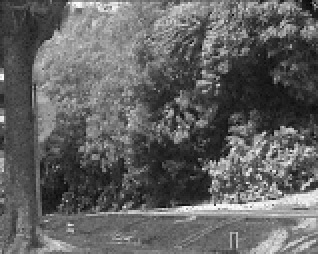}}
		\endminipage\hfill
		\minipage{0.16\textwidth}
		\subfigure[SubGM]{\includegraphics[width=0.9\linewidth]{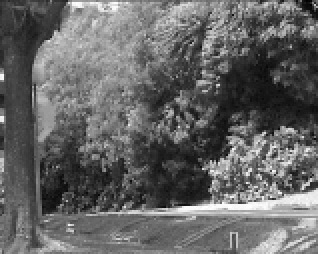}}
		\endminipage\hfill
		\caption{Background estimation for ``Campus Trees'' video data. The first row are recovered from $50\%$ observed pixels and the second row are recovered from $10\%$ observed pixels . (a), (g) One of the original image frame. (b), (h) Background frame estimated by ManPGC. (c), (i) Background frame estimated by AManPGC. (d), (j) Background frame estimated by RMC \cite{cambier2016robust}. (e), (k) Background frame estimated by LMaFit \cite{shen2014augmented}. (f), (l) Background frame estimated by SubGM \cite{li2018nonconvex}.}
		\label{fig:real_campus}
	\end{center}
	\vskip -0.1in
\end{figure*}

\begin{table*}[h] \renewcommand{\arraystretch}{1.3} \caption{CPU time and iteration number comparison (Dataset 2).} \label{table:campus}
\centering
\begin{tabular}{lccccr}
\hline
Algorithm 			& AManPGC 	& ManPGC 	& RMC 	& LMaFit 	& SubGM\\
\hline
CPU time (50\%) 		&1.76		&1.56	&3.21	&13.42	&3.78\\
\hline
Iteration number (50\%)	&7			&7		&17		&26		&50\\
\hline
CPU time (10\%) 		&0.82		&0.67	&2.31	&10.93	&1.76\\
\hline
Iteration number (10\%) 	&9			&11		&26		&28		&50\\
\hline
\end{tabular}
\end{table*}

We then test our algorithms in the following settings:\\
\textbf{Case 1:} We pick $m = n = 5000$, $r = 5$, sampling ratio of around $10\%$ and sparsity of $10\%$. We tune $t_U = 2/\lvert \Omega \rvert, \epsilon_0 = 30$ for both AManPGC and ManPGC.\\
\textbf{Case 2:} We pick $m =1000,  n = 30000$, $r = 5$. sampling ratio of around $10\%$ and sparsity of $10\%$. We tune $t_U = 2/\lvert \Omega \rvert, \epsilon_0 = 30$ for both AManPGC and ManPGC.\\ 
\textbf{Case 3:} We pick $m = n = 10000$, $r = 5$. sampling ratio of around $10\%$ and sparsity of $10\%$. We tune $t_U = 2/\lvert \Omega \rvert, \epsilon_0 = 30$ for both AManPGC and ManPGC.\\ 
\textbf{Case 4:} We pick $m = n = 2000$, $r = 10$, sampling ratio of around $20\%$ and sparsity of $10\%$. We tune $t_U = 2/\lvert \Omega \rvert, \epsilon_0 = 30$ for AManPGC and $t_U = 1.6/\lvert \Omega \rvert, \epsilon_0 = 20$ for ManPGC.\\
\textbf{Case 5:} We pick $m = n = 5000$, $r = 10$, sampling ratio of around $10\%$ and sparsity of $10\%$. We tune $t_U = 2/\lvert \Omega \rvert, \epsilon_0 = 30$ for both AManPGC and ManPGC.\\
\textbf{Case 6:} We pick $m = n = 5000$, $r = 5$, sampling ratio of around $20\%$ and sparsity of $10\%$. We tune $t_U = 2/\lvert \Omega \rvert, \epsilon_0 = 30$ for both AManPGC and ManPGC.\\
\textbf{Case 7:} We pick $m = n = 5000$, $r = 5$, sampling ratio of around $10\%$ and sparsity of $20\%$. We tune $t_U = 2/\lvert \Omega \rvert, \epsilon_0 = 30$ for both AManPGC and ManPGC.\\
\textbf{Case 8:} We pick $m = n = 10000$, $r = 5$, sampling ratio of around $20\%$ and sparsity of $10\%$.We tune $t_U = 2/\lvert \Omega \rvert, \epsilon_0 = 100$ for both AManPGC and ManPGC. \\

\begin{figure*}[h]
	\centering
		\minipage{0.16\textwidth}
		\subfigure[Original Image]{\includegraphics[width=0.9\linewidth]{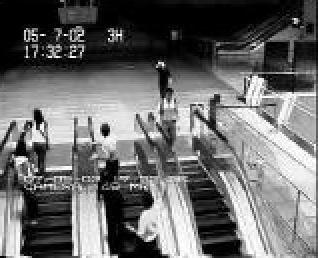}}
		\endminipage\hfill
		\minipage{0.16\textwidth}
		\subfigure[ManPGC]{\includegraphics[width=0.9\linewidth]{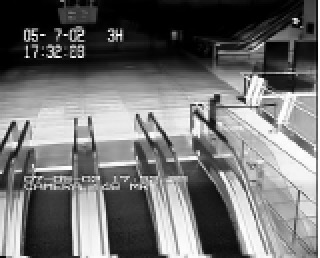}}
		\endminipage\hfill
		\minipage{0.16\textwidth}
		\subfigure[AManPGC]{\includegraphics[width=0.9\linewidth]{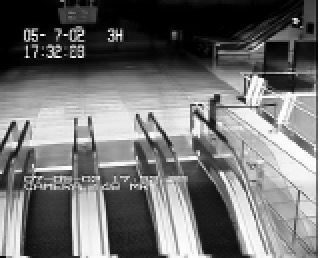}}
		\endminipage\hfill
		\minipage{0.16\textwidth}
		\subfigure[RMC]{\includegraphics[width=0.9\linewidth]{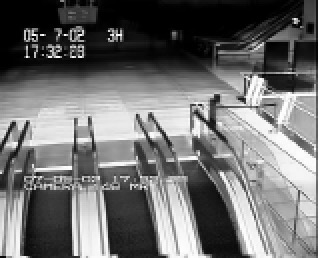}}
		\endminipage\hfill
		\minipage{0.16\textwidth}
		\subfigure[LMaFit]{\includegraphics[width=0.9\linewidth]{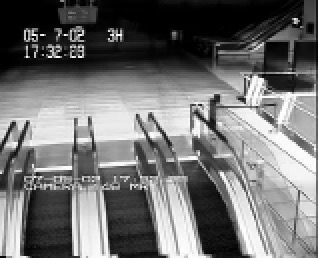}}
		\endminipage\hfill
		\minipage{0.16\textwidth}
		\subfigure[SubGM]{\includegraphics[width=0.9\linewidth]{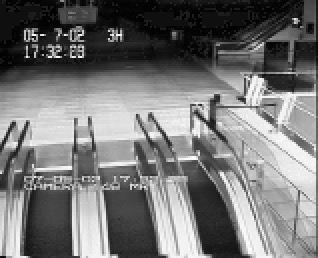}}
		\endminipage\hfill
		\minipage{0.16\textwidth}
		\subfigure[Original Image]{\includegraphics[width=0.9\linewidth]{airport_original.png}}
		\endminipage\hfill
		\minipage{0.16\textwidth}
		\subfigure[ManPGC]{\includegraphics[width=0.9\linewidth]{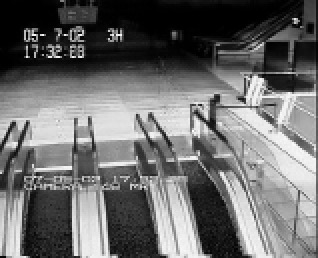}}
		\endminipage\hfill
		\minipage{0.16\textwidth}
		\subfigure[AManPGC]{\includegraphics[width=0.9\linewidth]{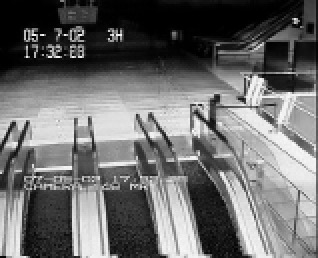}}
		\endminipage\hfill
		\minipage{0.16\textwidth}
		\subfigure[RMC]{\includegraphics[width=0.9\linewidth]{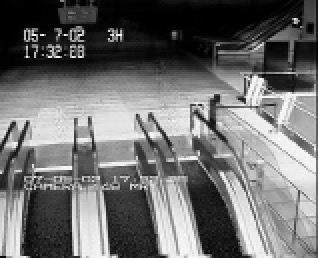}}
		\endminipage\hfill
		\minipage{0.16\textwidth}
		\subfigure[LMaFit]{\includegraphics[width=0.9\linewidth]{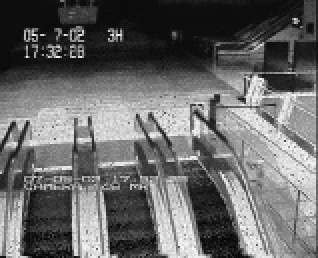}}
		\endminipage\hfill
		\minipage{0.16\textwidth}
		\subfigure[SubGM]{\includegraphics[width=0.9\linewidth]{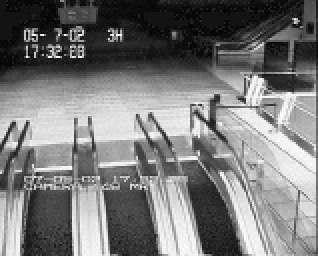}}
		\endminipage\hfill
		\caption{Background estimation for ``Airport Elevator'' video data. The first row are recovered from $50\%$ observed pixels and the second row are recovered from $10\%$ observed pixels . (a), (g) One of the original image frame. (b), (h) Background frame estimated by ManPGC. (c), (i) Background frame estimated by AManPGC. (d), (j) Background frame estimated by RMC \cite{cambier2016robust}. (e), (k) Background frame estimated by LMaFit \cite{shen2014augmented}. (f), (l) Background frame estimated by SubGM \cite{li2018nonconvex}.}
		\label{fig:real_airport}
	\vskip -0.1in
\end{figure*}

\begin{table*}[h] \renewcommand{\arraystretch}{1.3} \caption{CPU time and iteration number comparison (Dataset 3).} \label{table:airport}
\centering
\begin{tabular}{lccccr}
\hline
Algorithm 			& AManPGC 	& ManPGC 	& RMC 	& LMaFit 	& SubGM\\
\hline
CPU time (50\%) 		&1.61		&1.37	&2.70	&9.97	&3.65\\
\hline
Iteration number (50\%)	&8			&8		&26		&25		&60\\
\hline
CPU time (10\%) 		&0.74		&0.95	&1.72	&9.12	&1.08\\
\hline
Iteration number (10\%) 	&15			&15		&28		&26		&60\\
\hline
\end{tabular}
\end{table*}

In the $k$-th iteration, we calculate the relative difference for each method as 
\begin{equation}
\begin{aligned}
\text{Relative Difference}(k) = \frac{\norm{U^kV^k - X^*}_F}{\norm{X^*}_F},
\end{aligned}
\end{equation}
 and report the Relative Difference versus both CPU time (second) and iteration number in Figure \ref{fig:synthetic}. 

From Figure \ref{fig:synthetic}, we can see that the proposed ManPGC and AManPGC algorithms outperform all other baseline algorithms on both CPU time and number of iterations for all cases. It also shows that for most cases, AManPGC performs better than ManPGC.

\subsection{Real Data: Video Background Estimation from Partial Observation}
We now evaluate the performance of ManPGC and AManPGC for video background estimation~\cite{li2004statistical}. By stacking the columns of each frame into a long vector, we obtain a low-rank plus sparse matrix with the almost fixed background as the low-rank part. In this case, we observe that even if we only have partial observation for each video frame, we still recover the whole background with very good quality. Additionally, using partial observation speeds up the background reconstruction process since we need less computation in each iteration for all algorithms. We apply all algorithms for the background estimation to three surveillance video datasets: ``Hall of a business building'',  ``Campus Trees'' and ``Airport Elevator''. In our implementation, the observed pixels are randomly picked from the stacked matrix. 

\textbf{Dataset 1:} ``Hall of a business building'' video is a sequence of $300$ grayscale frames of size $144 \times 176$. So the matrix $X^*\in \R^{25344\times 300}$. \\
\textbf{Dataset 2:} ``Campus Trees'' video is a sequence of $994$ grayscale frames of size $128 \times 160$. So the matrix $X^*\in \R^{ 20480 \times 994}$. \\
\textbf{Dataset 3:}``Airport Elevator'' video is a sequence of $997$ grayscale frames of size $130 \times 160$. So the matrix $X^*\in \R^{ 20800 \times 997}$. \\

We set $r = 2$ and test two cases when we have $50\%$ and $10\%$ observed pixels for each dataset. We terminate each algorithm when the recovered matrix is stable. Specifically, we stop each algorithm when the following inequality is satisfied:
\begin{equation}
\begin{aligned}
\frac{\norm{U^kV^k - U^{k-1}V^{k-1}}_F}{\norm{U^{k-1}V^{k-1}}_F} \le \delta,
\end{aligned}
\end{equation}
where we choose $ \delta = 0.01$ for all cases.
We report the background pictures, the running time and the number of iterations in Figure \ref{fig:real_hall}, Figure \ref{fig:real_campus}, Figure \ref{fig:real_airport}, and Table \ref{table:hall}, Table \ref{table:campus}, Table \ref{table:airport}.

From Figure \ref{fig:real_hall}, Figure \ref{fig:real_campus}, Figure \ref{fig:real_airport}, and Table \ref{table:hall}, Table \ref{table:campus}, Table \ref{table:airport}, we conclude that when we have $50\%$ observed pixels, all algorithms can recover the background to a very high quality. Furthermore, the proposed ManPGC and AManPGC algorithms can recover the background using the least number of iterations and run the fastest among all algorithms. We see that $10\%$ observed pixels also can give a good recovery and it further reduces the running time and the iteration numbers for all algorithms. In each case, our proposed ManPGC and AManPGC algorithms still run the fastest. It is reported in  \cite{zhang2018unified} that when we have full observation of the RPCA, recovering the background for Dataset 1 takes at least 18 seconds. Here by only using partial observations, we finish the same task in less than one second, which shows the advantages of partial observation background recovery.

\section{Conclusion} \label{sec:conclution}
In this paper, we have proposed a new formulation for RMC over Grassmann Manifold. Inspired by recent work of ManPG, we have developed a new algorithm called AManPGC for solving this nonconvex nonsmooth manifold optimization problem. We have provided rigorous analysis for the convergence of AManPG algorithm. In our numerical experiments, we have tested our proposed formulation and algorithms for both synthetic and real datasets. Both experiments show that our proposed methods outperform the state-of-the-art methods.

\appendices
\section{Proof of Lemma \ref{lem1}} \label{proof:lemma1}

\begin{proof} For fixed $U\in\M$ and $S\in\br^{m\times n}$, define
\[
g_{U,S}(T) := \langle \nabla_Sf(U, S), T \rangle + \frac{1}{2t_S} \norm{T}_F^2 + h(S + T).
\]
It is obvious that $g_{U,S}$ is $(1/t_S)$-strongly convex, so we have
\be\label{strong-conv}\ba{lll}
g_{U,S}(T_1) \geq& g_{U,S}(T_2) +  \langle \partial g_{U,S}(T_2), T_1-T_2 \rangle +\\
 &\frac{1}{2t_S} \norm{T_1-T_2}_F^2, \quad \forall T_1, T_2\in \mathbb{R}^{m \times n}.
\ea\ee
By letting $T_1=0$, $T_2=\Delta S^k$ in \eqref{strong-conv}, we have
\be\label{ineq3}\ba{lll}
g_{U^k,S^k}(0) &\geq g(\Delta S^k) - \langle \partial g_{U^k,S^k}(\Delta S^k), \Delta S^k \rangle \\&\hspace{4mm}+ \frac{1}{2t_S} \norm{\Delta S^k}_F^2\\
& = g_{U^k,S^k}(\Delta S^k) + \frac{1}{2t_S} \norm{\Delta S^k}_F^2,
\ea
\ee
where the equality is from the optimality condition of \eqref{AManPG-general-dS}, i.e., $0 \in \partial g_{U^k,S^k}(\Delta S^k)$. 
Using Assumption \ref{assu-Lip-nabla-S}, we can get
\begin{eqnarray}\label{ineq2}
f(U^k,S^{k+1}) - f(U^k,S^k) \le && \hspace{-5mm}\langle \nabla_S f(U^k, S^k), \Delta S^k \rangle\nonumber \\&&+ \frac{L_S}{2} \norm{\Delta S^k}_F^2.
\end{eqnarray}
Therefore
\[
\begin{aligned}
&F(U^k, S^{k+1}) - F(U^k, S^k)\\  &= f(U^k, S^{k+1}) - f(U^k, S^k) + h(S^k + \Delta S^k) - h(S^k)\\
&\leq \langle \nabla_S f( U^k, S^k), \Delta S^k \rangle + \frac{L_S}{2} \norm{\Delta S^k}_F^2 \\
 &+ h(S^k +\Delta S^k) - h(S^k)\\
& =  \frac{L_S}{2} \norm{\Delta S^k}_F^2 + g(\Delta S^k) - \frac{1}{2t_S} \norm{\Delta S^k}_F^2  - g(0)\\
&\leq \left(\frac{L_S}{2} - \frac{1}{t_S}\right)\norm{\Delta S^k}_F^2\\
& = -\frac{L_S}{2} \norm{\Delta S^k}_F^2,
\end{aligned}
\]
where the first inequality comes from \eqref{ineq2}, and the second inequality is from \eqref{ineq3}. This completes the proof.
\end{proof}

\section{Proof of Lemma \ref{lem2}}\label{proof:lemma2}

\begin{proof}
From Assumption \ref{assu-pullback}, we have
\[
\begin{aligned}
&F(U^{k+1}, S^{k+1}) - F(U^k, S^{k+1}) \\
&= f(U^{k+1}, S^{k+1}) - f(U^k, S^{k+1}) \\
& \leq f(Retr_{U^k}(\Delta U^k), S^{k+1}) - f(U^k, S^{k+1}) \\
& \leq  \langle \Delta U^k, \grad_U f(U^k, S^{k+1}) \rangle + \frac{L_U}{2} \norm{\Delta U^k}_F^2\\
& \leq \left(\frac{L_U}{2} - \frac{1}{t_U}\right)\norm{\Delta U^k}_F^2\\
& = -\frac{L_U}{2}\norm{\Delta U^k}_F^2,
\end{aligned}
\]
where the second inequality is due to \eqref{opt-cond-sub}. This completes the proof.
\end{proof}

\section{Proof of Theorem \ref{thm1} }\label{proof:theorem1}

\begin{proof} Combining \eqref{lem1-ineq} and \eqref{lem2-ineq} yields,
\be\label{thm-long-ineq}
\begin{aligned}
&F(U^{k+1},S^{k+1}) - F(U^k,S^k) \\
& =  F(U^{k+1}, S^{k+1}) -F(U^{k}, S^{k+1}) + F(U^{k}, S^{k+1}) \\&\hspace{8mm}- F(U^k, S^k)\\
&\leq -\frac{L_S}{2}\norm{\Delta S^k}_F^2 -\frac{L_U}{2}\norm{\Delta U^k}_F^2\\
&\leq - \frac{L}{2}\left(\norm{\Delta S^k}_F^2 + \norm{\Delta U^k}_F^2\right).
\end{aligned}
\ee
Since $F$ is decreasing and bounded below, we have
\[
\lim_{k \to \infty} \left(\norm{\Delta S^k}_F^2 + \norm{\Delta U^k}_F^2\right) = 0.
\]
It follows that every limit point of $\{(U^k,S^k)\}$ is a stationary point of \eqref{ourRMC-noV-rewrite}. Moreover, if AManPG \eqref{AManPG-general} does not terminate after $K$ iterations, i.e., \eqref{terminate-crit} is not satisfied, we have
\[\left(\norm{\Delta S^k}_F^2 + \norm{\Delta U^k}_F^2\right) > \epsilon^2/L^2, \mbox{ for } k =0, 1, ... K.\]
Then summing \eqref{thm-long-ineq} over $k=0,\ldots,K-1$, we have
\be\label{thm-con}
\begin{aligned}
F(U^0,S^0) - F* &\geq F(U^0,S^0) - F(U^K,S^K) \\
&\geq \sum_{k = 0}^{K-1}\frac{L}{2} \left(\norm{\Delta S^k}_F^2 + \norm{\Delta U^k}_F^2\right) \\
&\geq \frac{K\epsilon^2}{2L}.
\end{aligned}
\ee
Therefore, the AManPG \eqref{AManPG-general} with termination criterion \eqref{terminate-crit} finds an $\epsilon$-stationary point of problem \eqref{ourRMC-noV-rewrite} in at most $\lceil 2L( F(X^0) - F*) / \epsilon^2\rceil$ iterations.
\end{proof}


\bibliography{AManPGC}
\bibliographystyle{IEEEtran}


\end{document}